\documentclass[sigconf]{acmart}
\AtBeginDocument{%
  }

\setcopyright{acmlicensed}
\copyrightyear{2026}
\acmYear{2026}
\acmDOI{XXXXXXX.XXXXXXX}
\acmConference[MM '26]{Proceedings of the 34th ACM International Conference on Multimedia}{November 10--14, 2026}{Rio de Janeiro, Brazil}
\acmISBN{978-1-4503-XXXX-X/2026/10}

\renewcommand\footnotetextcopyrightpermission[1]{}

\usepackage[T1]{fontenc}
\usepackage{booktabs}
\usepackage{multirow}
\usepackage{colortbl}
\usepackage{xcolor}
\usepackage{pifont}
\usepackage{amsthm}
\usepackage{makecell}

\definecolor{bestbg}{HTML}{DCEEFB}    
\definecolor{secondbg}{HTML}{FDE8D0}  
\definecolor{midgray}{HTML}{888888}   
\newcommand{\best}{\cellcolor{bestbg}}
\newcommand{\secondbest}{\cellcolor{secondbg}}
\newcommand{\boldres}[1]{\cellcolor{bestbg}\textbf{#1}}
\newcommand{\secondres}[1]{\cellcolor{secondbg}\underline{#1}}

\usepackage{graphicx}

\usepackage{amsmath,amsfonts,bm}

\def\1{\bm{1}}

\def\rvn{{\mathbf{n}}}

\def\rvp{{\mathbf{p}}}

\def\vq{{\bm{q}}}

\def\vt{{\bm{t}}}

\def\vv{{\bm{v}}}

\def\mX{{\bm{X}}}

\DeclareMathAlphabet{\mathsfit}{\encodingdefault}{\sfdefault}{m}{sl}
\SetMathAlphabet{\mathsfit}{bold}{\encodingdefault}{\sfdefault}{bx}{n}

\def\gD{{\mathcal{D}}}

\def\gF{{\mathcal{F}}}

\def\gH{{\mathcal{H}}}

\def\gL{{\mathcal{L}}}
\def\gM{{\mathcal{M}}}

\def\gO{{\mathcal{O}}}

\def\gT{{\mathcal{T}}}

\def\gV{{\mathcal{V}}}

\providecommand{\E}{\mathbb{E}}

\DeclareMathOperator*{\argmin}{arg\,min}

\newtheorem{theorem}{Theorem}
\newtheorem{proposition}[theorem]{Proposition}
\newtheorem{assumption}{Assumption}

\begin{document}

\title{FM$^2$: Unified Federated Foundation Models for Heterogeneous Multimodal Medical Imaging}

\author{Shengchao Chen}
\authornote{Work done during a project-based visiting appointment at Shenzhen University. Email: \texttt{shengchao.chen.uts@gmail.com}.}
\affiliation{%
  \institution{School of Artificial Intelligence, Shenzhen University}
  \city{Shenzhen}
  \country{China}}
\affiliation{%
  \institution{Australian AI Institute, University of Technology Sydney}
  \city{Sydney}
  \country{Australia}}

\author{Ting Shu}
\authornote{Corresponding author. Email: \texttt{tingshu@szu.edu.cn}.}
\affiliation{%
  \institution{School of Artificial Intelligence, Shenzhen University}
  \city{Shenzhen}
  \country{China}}

\renewcommand{\shortauthors}{Shengchao Chen and Ting Shu}

\begin{abstract}
Building foundation models for medical imaging requires pooling data across institutions, yet privacy regulations prohibit centralized aggregation. Existing Federated Foundation Models either fine-tune natural-image models with poor medical-domain transfer, or train from scratch within a single modality, lacking the flexibility to unify tasks. We identify an under-explored challenge, \emph{Imaging Modality Heterogeneity}, where clients operate under two structural regimes: \emph{Overlapped} (shared modalities with heterogeneous label distributions) and \emph{Non-overlapped} (fully disjoint modalities per client). We propose \textbf{FM$^2$}, a unified framework that trains the core backbone from scratch to preserve medical domain fidelity while optionally incorporating biomedical pretrained encoders for vision-language alignment. FM$^2$ equips each client with dual Mixture-of-Experts modules (a Class-wise MoE for personalized category knowledge and a Domain-wise MoE for shared cross-modality representations), coupled with a \emph{Heterogeneous Modality Alignment} (HMA) regularizer that explicitly aligns modality-specific expert parameters, admitting provable $\gO(1/\sqrt{T})$ convergence and generalization guarantees. FM$^2$ further incorporates \emph{Caption-Enhanced Learning (CEL)}, where locally retained GPT-4o-generated captions serve as a textual semantic bridge enabling representation transfer across clients with disjoint modalities, and demonstrates extensibility to \emph{Federated Medical VQA}. Experiments on our MIMH benchmark (classification and CEL) and real-world medical VQA datasets confirm consistent superiority over state-of-the-art federated baselines and strong out-of-modality generalization across all three tasks.
\end{abstract}

\begin{CCSXML}
<ccs2012>
 <concept>
  <concept_id>10010147.10010178.10010224</concept_id>
  <concept_desc>Computing methodologies~Computer vision</concept_desc>
  <concept_significance>500</concept_significance>
 </concept>
 <concept>
  <concept_id>10010147.10010257</concept_id>
  <concept_desc>Computing methodologies~Machine learning</concept_desc>
  <concept_significance>500</concept_significance>
 </concept>
</ccs2012>
\end{CCSXML}

\ccsdesc[500]{Computing methodologies~Computer vision}
\ccsdesc[500]{Computing methodologies~Machine learning}

\keywords{Federated Learning, Medical Imaging, Data Heterogeneity, Foundation Models, Mixture-of-Experts, Medical VQA}

\maketitle
\enlargethispage{-17pt}

\section{Introduction}

Foundation Models (FMs) have demonstrated remarkable cross-domain generalization by training large-scale architectures that integrate multiple domain-specific sub-models~\cite{zhang2024challenges}. In medical imaging, building such FMs requires pooling numerous datasets from diverse institutions. However, strict privacy regulations such as GDPR~\cite{voigt2017eu} and HIPAA~\cite{cohen2018hipaa} prohibit the centralized aggregation of sensitive clinical data, creating a fundamental tension between model capability and data governance. Federated Foundation Models (FFMs)~\cite{zhuang2023foundation} resolve this tension by enabling collaborative training across institutions without exposing raw data, making them a natural paradigm for privacy-preserving medical AI.

Recent FFMs for medical imaging predominantly \emph{fine-tune} pretrained natural-image models, including ViT~\cite{dosovitskiy2020image}, CLIP~\cite{radford2021learning}, and SAM~\cite{kirillov2023segment}, on medical downstream tasks. Liu et al.~\cite{liu2024fedfms} fine-tuned SAM for federated medical image segmentation; Wu et al.~\cite{wu2024facmic} adapted CLIP for federated classification. Despite their convenience, these methods inherit a critical limitation: empirical evidence~\cite{huix2024natural} shows that natural-image representations transfer poorly to the medical domain, where subtle inter-class variations, acquisition noise, and precise anatomical context demand domain-native knowledge that pretrained features cannot provide.

An alternative line of work trains FFMs \emph{from scratch} on curated medical data~\cite{liu2024foundation,tolle2024federated}, acquiring richer domain-specific representations, echoing evidence from other modalities that federating the from-scratch training of a foundation model can match centralized counterparts while neutralizing per-domain bias~\cite{chen2026fedal}. Yet these efforts target \emph{single} imaging modalities and \emph{single} tasks. Real-world clinical AI demands a unified framework that simultaneously handles classification, caption-supervised representation learning, and visual question answering across heterogeneous modalities, under both \emph{Overlapped} configurations (clients share modalities but with non-IID label distributions) and \emph{Non-overlapped} configurations (each client holds a completely disjoint modality). Existing federated frameworks address at most a subset of these requirements. The compounding of classical \emph{data heterogeneity} (non-IID labels) with \textbf{Imaging Modality Heterogeneity}, where feature distributions, noise characteristics, and semantic granularity differ fundamentally across clients, makes standard personalized FL methods, designed for label skew alone, insufficient for this setting.

This paper proposes \textbf{FM$^2$} (\textbf{F}ederated \textbf{M}edical \textbf{M}ultimodal foundation model), a framework that trains from scratch under Imaging Modality Heterogeneity and extends them to multimodal medical understanding. The core of FM$^2$ is a dual Mixture-of-Experts (MoE) architecture: a \emph{Class-wise MoE} retains personalized per-category knowledge locally, while a \emph{Domain-wise MoE} captures shared cross-modality representations that are globally aggregated. A \emph{Heterogeneous Modality Alignment} regularizer constrains the domain-wise experts toward global consensus, for which we prove a convergence rate of $\gO(1/\sqrt{T})$ and an explicit generalization bound. Beyond classification, FM$^2$ supports two multimodal extensions: (i)~\emph{Caption-Enhanced Learning (CEL)}, where GPT-4o-generated captions (retained locally for privacy) provide federated vision-language contrastive supervision, serving as a shared semantic bridge that enriches representations even across clients with disjoint visual modalities; and (ii)~\emph{Federated Medical VQA}, demonstrating FM$^2$'s extensibility to natural-language querying of medical images across institutions. Our contributions are summarized as follows:
\begin{itemize}
    \item We formalize \emph{Imaging Modality Heterogeneity} under both Overlapped and Non-overlapped modality regimes, and propose FM$^2$, the first unified multi-task federated framework addressing both settings simultaneously.
    \item We design dual MoE modules with a \emph{Heterogeneous Modality Alignment} regularizer targeting modality-specific expert drift, and prove $\gO(1/\sqrt{T})$ convergence and an explicit generalization bound (Theorem~\ref{thm:convergence}--Proposition~\ref{prop:generalization}).
    \item We introduce \emph{Caption-Enhanced Learning (CEL)}, where locally retained GPT-4o captions provide a shared textual bridge: language descriptions share a common vocabulary across modalities, enabling representation transfer even when clients hold entirely disjoint visual domains.
    \item We construct the \textbf{MIMH} benchmark (five modalities, four configurations) for federated \emph{Classification} and \emph{Caption Enhanced Learning}, and additionally evaluate on real-world medical VQA datasets for \emph{Federated Medical VQA}, confirming consistent superiority across all three tasks with strong out-of-modality generalization.
\end{itemize}

\section{Related Work}
\paragraph{\bf Heterogeneous Federated Learning.} FL trains models collaboratively across decentralized clients without exposing raw data~\cite{mcmahan2017communication}. Statistical heterogeneity (non-IID distributions) is a central challenge: naive aggregation causes client drift and degraded convergence. Personalized FL (PFL) addresses this through several families of techniques. Regularization-based methods~\cite{hanzely2020lower,li2021ditto} add proximal penalties to keep local models close to the global reference while permitting per-client drift; Ditto~\cite{li2021ditto} exemplifies this by solving a personalized objective with a global proximal term, and FFTS~\cite{chen2025federated} extends the idea to foundation models by regularizing both client and server side to align knowledge across statistically heterogeneous datasets. Partial-sharing methods split the network into a globally aggregated backbone and a local task-specific head: FedBN~\cite{li2021fedbn} keeps batch normalization layers local to absorb feature shift, FedRep~\cite{collins2021exploiting} retains only the classifier locally, and prompt- or adapter-based approaches~\cite{chen2023prompt,chen2024personalized} share only lightweight parameters while keeping the backbone on-device. Adaptive aggregation~\cite{zhang2020personalized} weights client contributions by similarity, and meta-learning methods such as Per-FedAvg~\cite{fallah2020personalized} learn a shared initialization that enables rapid per-client fine-tuning. Despite their diversity, all of these approaches assume a homogeneous feature space and optimize for a single task, making them unsuitable when clients differ fundamentally in imaging modality. Mixture-of-experts approaches have also been explored in FL~\cite{marfoq2021federated,xie2025dflmoe}, yet none disentangle class-level from modality-level heterogeneity simultaneously; centralized missing-modality strategies~\cite{xu2023multimodal} are further inapplicable since FL prohibits raw cross-site feature aggregation.

\paragraph{\bf FL for Medical Images.} Federated learning has been applied to privacy-preserving medical image classification~\cite{ren2024federated,chen2023interpretable,feng2026visual,wang2024ensemble} and, more recently, to vision-language tasks including medical VQA~\cite{liu2021slake,lau2018dataset}. For classification, methods such as FedFMS~\cite{liu2024fedfms} and FedTCA~\cite{chen2025restyled} are typically task-specific and backbone-dependent, requiring re-engineering for each setting and assuming a single shared imaging modality. For vision-language understanding, FedDAT~\cite{chen2024feddat}, PromptFL~\cite{guo2023promptfl}, and F$^3$OCUS~\cite{saha2025f} adapt pretrained VLMs to federated settings via dual adapters, prompt tuning, and client-layer meta-learning, but remain confined to single-modality client scenarios. No existing method jointly supports classification and VQA under a protocol that handles cross-modality structure. FM$^2$ fills this gap by disentangling class-level personalization from modality-level consensus via dual MoE modules, enabling unified multi-task training across heterogeneous imaging modalities without task-specific re-engineering.
\section{Methodology}

\subsection{Overview}

\textbf{Fig.~\ref{fig:overall_framework}} illustrates the FM$^2$ framework. Each client first trains a local visual encoder $\gF(\cdot) = \{\gF_b, \gF_h\}$ on its own data. The backbone $\gF_b$ then serves as a frozen \emph{Observer} $\gV$ that performs Top-$K$ Sampling to extract representative subsets $\gD_1, \ldots, \gD_N$ per class, forming a sampled dataset $\gD_s = \bigcup_{i=1}^N \gD_i$. Two MoE modules, namely Class-wise MoE $\gM_c$ and Domain-wise MoE $\gM_d$, are trained on $\gD_s$ following $\gV$. After local updates, $\gV$ and $\gM_d$ are uploaded for global aggregation; $\gM_c$ remains local for personalization. $\gV$ is frozen during stage-2 MoE training but uploaded at round end so the server can aggregate and redistribute a refined backbone for the next round's stage-1 initialization. For multimodal extension, an optional text encoder $\gT$ processes expert-generated captions, and a VQA head $\gH_{\text{vqa}}$ enables question answering, both sharing the same MoE-enhanced visual backbone.

\begin{figure*}[t]
    \centering
    \includegraphics[width=.92\textwidth]{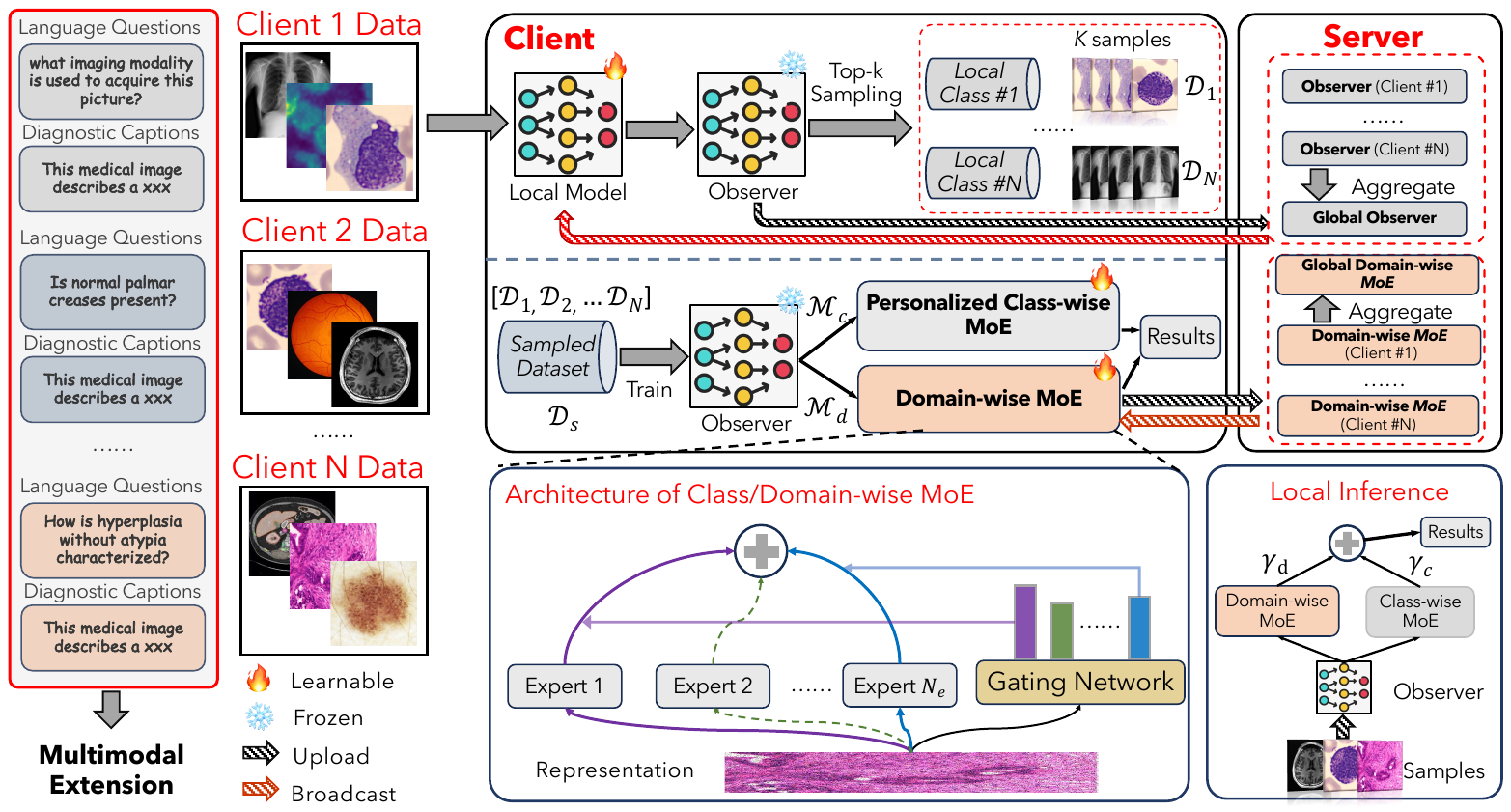}
    \caption{FM$^2$ (single-client view). Observer extracts representative samples for training dual MoE modules. Domain-wise MoE is globally aggregated; Class-wise MoE remains local. Text inputs (diagnostic captions for CEL; questions for VQA) are processed by a local text encoder and fused with MoE-enhanced visual features, enabling all downstream tasks within a single protocol.}
    \label{fig:overall_framework}
\end{figure*}

\subsection{Dual Mixture-of-Experts for Heterogeneity}

\paragraph{Observer and Top-$K$ Sampling.}
In the first training stage, each client trains its local model $\gF$ with labeled images. The trained backbone, designated as the Observer $\gV = \gF_b$, computes a representativeness score $S = \texttt{Sigmoid}(\gV(\mX_z))$ for each class $z$. Top-$K$ Sampling selects the $K$ highest-scoring samples per class, forming subsets $\gD_1, \ldots, \gD_N$ (sensitivity analysis of $K$ in Appendix~B). This observer-guided selection addresses local label drift by ensuring that only the most informative samples enter the second stage.

\paragraph{Class-wise MoE (CMoE)}
To handle label heterogeneity, the CMoE $\gM_c$ allocates $N_c$ experts (one per local class, $N_c = N$), each specializing in class-specific feature transformations:
\begin{equation}\label{eq:cmoe}
    \hat{\mX} = \sum_{\rvn=1}^{N_c} W_{\rvn} \cdot E_{\rvn}(\mX_{\text{mid}}),\quad \text{where}\; E(\mX_{\text{mid}}) = \textsc{ReLU}(\textsc{MLP}(\mX_{\text{mid}})),
\end{equation}
where $\mX_{\text{mid}}$ and $\hat{\mX}$ denote latent and output representations, $E_{\rvn}$ is the $\rvn$-th expert network, and $W_{\rvn}$ is the gating weight. By decomposing the representation space into class-specific subspaces, CMoE transforms the classification problem into learning multiple potential class-specific knowledge bases, each tailored to the client's local label distribution. The CMoE remains on each client and is never uploaded, preserving personalization.

\paragraph{Domain-wise MoE (DMoE)}
To handle modality heterogeneity, the DMoE $\gM_d$ allocates $N_d$ experts (one per imaging modality), each capturing modality-specific feature transformations. Its architecture mirrors CMoE but with experts indexed by imaging domain rather than class. Expert indices are pre-assigned via a shared global modality vocabulary declared at federation setup, reflecting the practical assumption that each clinical site knows its imaging modality (e.g., a radiology department knows it operates CT scanners); extension to unknown-modality scenarios via online feature clustering is discussed in Appendix~C. The DMoE is globally aggregated, enabling cross-client knowledge transfer of domain representations.

For inference, the final prediction combines both MoE modules:
\begin{equation}\label{eq:eq2}
    y = \gamma_d \cdot \gM_d(\gV(\mX)) + \gamma_c \cdot \gM_c(\gV(\mX)),
\end{equation}
where $\gamma_d, \gamma_c \in [0,1]$ with $\gamma_d + \gamma_c = 1$ balance domain consensus and personalized expertise.

\subsection{Heterogeneous Modality Alignment}

Aggregating domain-wise experts across clients with fundamentally different imaging modalities risks introducing consensus bias. We mitigate this through a regularization that explicitly constrains the local DMoE parameters $\gM_d^i$ toward the global aggregate $\hat{\gM}_d$:
\begin{equation}\label{eq:optimization}
\begin{aligned}
    &\min_{\{\theta_1, \ldots, \theta_Z\}}  \sum_{i=1}^Z \frac{n_i}{n} \bigl[\gL_i(\theta_i;\gD^i, \gD_s^i) + \lambda \|\gM_d^i - \hat{\gM}_d\|^2\bigr], \\
    &\text{s.t.}\quad \hat{\gM}_d = \argmin_{\gM_d} G(\gM_d^1, \ldots, \gM_d^Z), \\
    &\text{where}\quad \gL_i(\theta_i;\gD^i, \gD_s^i) = -\sum_{c=1}^C y_c \log(\hat{y}_i(x)),\; \theta_i = \{\gV^i, \gM_d^i, \gM_c^i\}.
\end{aligned}
\end{equation}
Here $\lambda > 0$ controls the alignment strength, $n_i / n$ weights clients by data volume, and $\hat{\gM}_d$ is obtained via FedAvg. The proximal term $\lambda\|\gM_d^i - \hat{\gM}_d\|^2$ serves two purposes: it prevents local domain experts from drifting too far from the global consensus while still allowing modality-specific adaptation. Unlike FedProx~\cite{li2020federated} and Ditto~\cite{li2021ditto}, which apply a global proximal penalty to the \emph{entire} model, HMA constrains only the DMoE parameters, exploiting the dual-MoE structure to selectively align modality-specific representations while leaving CMoE fully free for per-client personalization. We establish formal convergence guarantees for this  in Sec.~\ref{sec:theory}.

\subsection{Multimodal Extensions}\label{sec:multimodal}

The dual MoE architecture provides a modality-aware visual backbone that naturally supports richer output modalities beyond classification. We introduce two extensions that operate on the same MoE-enhanced features without modifying the core framework.

\paragraph{Caption-Enhanced Learning (CEL)}
Medical images encode rich diagnostic information that class labels alone cannot capture, such as lesion morphology, tissue texture, and spatial relationships. This limitation is particularly acute in the Non-overlapped setting (MIMH-SA), where clients hold entirely disjoint imaging modalities: visual feature distributions share no common structure across clients, yet their textual descriptions share a common diagnostic vocabulary regardless of modality. CEL exploits this asymmetry by aligning local visual features with LLM-generated captions through a federated contrastive objective, so that language serves as a shared semantic bridge enabling cross-client representation transfer even when visual domains are completely disjoint. To inject this knowledge, we generate diagnostic captions for each image in the MIMH dataset using GPT-4o~\cite{openai2024gpt4o} via a ReAct-based~\cite{yao2023react} agent pipeline (full prompts and quality-filtering criteria in Appendix~D) in a zero-shot setting (e.g., \emph{``Fundus image showing microaneurysms and hard exudates in the macular region, consistent with moderate non-proliferative diabetic retinopathy''}). A lightweight text encoder $\gT$ (PubMedBERT~\cite{gu2021domain}) extracts caption embeddings $\vt = \gT(\text{caption})$, which are aligned with the visual features $\vv = \gV(\mX)$ through the federated contrastive objective:
\begin{equation}\label{eq:cel}
    \gL_{\text{CEL}} = -\frac{1}{B}\sum_{j=1}^{B} \log \frac{\exp(\text{sim}(\vv_j, \vt_j) / \tau)}{\sum_{k=1}^{B} \exp(\text{sim}(\vv_j, \vt_k) / \tau)},
\end{equation}
where $\text{sim}(\cdot, \cdot)$ denotes cosine similarity, $\tau$ is a temperature parameter, and $B$ is the batch size. The total local objective becomes $\gL_i + \alpha \cdot \gL_{\text{CEL}}$, where $\alpha$ balances classification and contrastive losses. Crucially, $\gL_{\text{CEL}}$ enriches the Observer's representations with fine-grained semantic structure \emph{without requiring caption data to leave the client}, preserving privacy. The text encoder parameters remain local; only the visual backbone and DMoE are aggregated.

\paragraph{Federated Medical VQA}
To demonstrate FM$^2$'s capacity for multimodal reasoning, we extend it to Visual Question Answering on medical images. Given an image $\mX$ and a natural-language question $q$, we extract visual features $\vv = \gM_d(\gV(\mX)) \oplus \gM_c(\gV(\mX))$ (where $\oplus$ denotes concatenation weighted by $\gamma_d, \gamma_c$) and question features $\vq = \gT_q(q)$ via a question encoder $\gT_q$. A lightweight answer module $\gH_{\text{vqa}}$ fuses both:
\begin{equation}\label{eq:vqa}
    a = \gH_{\text{vqa}}(\vv \odot \vq),
\end{equation}
where $\odot$ denotes element-wise gating. The VQA loss $\gL_{\text{VQA}}$ (binary cross-entropy over candidate answers) is added to the local objective. The DMoE's domain-specific experts naturally handle modality-dependent questions (e.g., questions about retinal layers vs.\ tissue staining), while the CMoE provides category-aware visual reasoning. The question encoder and answer head are locally trained; the shared visual backbone benefits from federated aggregation across all tasks.

\subsection{Theoretical Analysis}\label{sec:theory}

We provide convergence and generalization guarantees for the FM$^2$ optimization defined in Eq.~\eqref{eq:optimization}. For clarity, we write the global objective as $F(\theta) = \sum_{i=1}^Z \frac{n_i}{n}\bigl[\gL_i(\theta_i) + \lambda\|\gM_d^i - \hat{\gM}_d\|^2\bigr]$, where each client's alignment penalty is included inside the per-client term.

\begin{assumption}\label{asm:smooth}
Each local loss $\gL_i(\theta)$ is $L$-smooth: $\|\nabla \gL_i(\theta) - \nabla \gL_i(\theta')\| \leq L\|\theta - \theta'\|$ for all $\theta, \theta'$. As the alignment penalty is quadratic with curvature $2\lambda$, $F$ is $(L{+}2\lambda)$-smooth; we absorb this into $L$ for brevity.
\end{assumption}

\begin{assumption}\label{asm:variance}
The stochastic gradient $g_i$ at client $i$ is unbiased, $\E[g_i] = \nabla \gL_i(\theta)$, satisfies $\E\|g_i - \nabla \gL_i(\theta)\|^2 \leq \sigma^2$, and its noise is independent across clients.
\end{assumption}

\begin{assumption}\label{asm:heterogeneity}
The modality heterogeneity is bounded:
\[\textstyle\frac{1}{Z}\sum\nolimits_{i=1}^Z \|\nabla \gL_i(\theta) - \nabla F(\theta)\|^2 \leq \delta^2.\]
\end{assumption}

\begin{theorem}[Convergence of FM$^2$]\label{thm:convergence}
Under Assumptions~\ref{asm:smooth}--\ref{asm:heterogeneity}, with learning rate $\eta \leq \min\{1/(4L),\,1/(2\lambda)\}$ and alignment parameter $\lambda > 0$, after $T$ communication rounds of FM$^2$, the averaged squared gradient norm satisfies:
\begin{equation}\label{eq:convergence}
\begin{split}
    \frac{1}{T}\sum_{t=0}^{T-1} \E\|\nabla F(\theta^t)\|^2 \leq\;
    & \underbrace{\frac{2(F(\theta^0) - F^*)}{\eta T}}_{\text{init.\ gap}}
    + \underbrace{\frac{4L\eta\sigma^2}{Z}}_{\text{stoch.\ noise}} \\
    & + \underbrace{4L\eta\delta^2}_{\text{heterogeneity}}
    + \underbrace{\frac{2\lambda L \Delta_{\gM}}{T}}_{\text{align.\ drift}},
\end{split}
\end{equation}
where $F^* = \inf_\theta F(\theta)$ and $\Delta_{\gM} = \max_i \|\gM_d^{i,0} - \hat{\gM}_d^0\|^2$ is the initial modality divergence.
\end{theorem}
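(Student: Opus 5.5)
The plan is the standard descent-lemma argument for nonconvex SGD-type federated updates, run on $F$ with the round-$t$ global update treated as one aggregated stochastic step. We also track the alignment penalty as a separate, time-decaying term.

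\textbf{Modeling the round.} Write one communication round as the aggregated step $\theta^{t+1} = \theta^t - \eta\, \bar g^t$, where $\bar g^t = \sum_i \frac{n_i}{n}\bigl(g_i^t + 2\lambda(\gM_d^{i,t} - \hat{\gM}_d^t)\bigr)$, with the penalty gradient acting only on the DMoE block. Applying $L$-smoothness of $F$ (Assumption~\ref{asm:smooth}, with $2\lambda$ absorbed) gives
\[
\E F(\theta^{t+1}) \le F(\theta^t) - \eta\langle \nabla F(\theta^t), \E\bar g^t\rangle + \tfrac{L\eta^2}{2}\E\|\bar g^t\|^2 .
\]
Unbiasedness (Assumption~\ref{asm:variance}) lets us write $\E\bar g^t = \nabla F(\theta^t) + b^t$. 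Here $b^t$ collects the mismatch coming from client-level gradients being evaluated at local rather than global parameters.

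\textbf{Bounding the second moment.} Decompose $\E\|\bar g^t\|^2$ into three pieces:
\begin{itemize}
\item the variance term, which is at most $\sigma^2/Z$ by independence across clients (taking equal weights, or bounding $\sum (n_i/n)^2 \le 1/Z$ in the balanced case);
\item the heterogeneity term, bounded by $\delta^2$ via Assumption~\ref{asm:heterogeneity} and Jensen;
\item $\|\nabla F(\theta^t)\|^2$ itself.
\end{itemize}
With $\eta\le 1/(4L)$, the coefficient on $\|\nabla F\|^2$ after combining is at least $\eta/2$. This yields
\[
\tfrac{\eta}{2}\E\|\nabla F(\theta^t)\|^2 \le \E F(\theta^t) - \E F(\theta^{t+1}) + 2L\eta^2\bigl(\tfrac{\sigma^2}{Z} + \delta^2\bigr) + \eta\,\E\|b^t\|\cdot(\ldots).
\]
The constants $4L\eta$ arise after dividing by $\eta/2$, possibly after a Young's inequality that doubles the variance and heterogeneity coefficients.

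\textbf{Handling the alignment drift (main obstacle).} The bias $b^t$ comes from the DMoE discrepancy $D^t = \max_i \|\gM_d^{i,t} - \hat{\gM}_d^t\|^2$. The condition $\eta\le 1/(2\lambda)$ is what I would use here. The local proximal step contracts each DMoE toward the aggregate by the factor $(1-2\eta\lambda)\in[0,1)$, while FedAvg re-centers $\hat{\gM}_d$. This should give a recursion of the form $D^{t+1}\le (1-2\eta\lambda)D^t + (\text{terms already counted})$, and so $\sum_t D^t \lesssim \Delta_{\gM}/(2\eta\lambda)\cdot(\ldots)$.

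I would bound the cross term by $\eta\lambda L\, D^t$ via smoothness and Cauchy–Schwarz. Summing the geometric series should then produce a total contribution of order $\lambda L\Delta_{\gM}$ independent of $T$, which becomes $2\lambda L\Delta_{\gM}/T$ after averaging. Getting the exact constant, and avoiding a stray $1/\eta$, is the delicate part. If the geometric sum leaves a $1/(\eta\lambda)$ factor, I would instead use the cruder monotonicity bound $D^t\le \Delta_{\gM}$ only on the first round and a telescoping argument afterwards.

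\textbf{Conclusion.} Telescope over $t=0,\dots,T-1$ and use $F(\theta^T)\ge F^*$. Then divide by $\eta T/2$ to obtain the four terms of \eqref{eq:convergence}. Finally, choosing $\eta\propto 1/\sqrt{T}$ gives the advertised $\gO(1/\sqrt{T})$ rate.
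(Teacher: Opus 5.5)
Your skeleton (descent lemma on $F$, the aggregated step $\bar g^t$ containing $2\lambda(\gM_d^{i,t}-\hat{\gM}_d^t)$, a $(1-2\lambda\eta)$-contraction for the DMoE residual, telescoping) is the paper's. The step you flag as delicate, however, fails for a structural reason, not a constant-chasing one. You let the drift enter through the first-order term $-\eta\langle\nabla F(\theta^t),\E\bar g^t\rangle$ as a bias, paying $\eta c\,D^t$ per round for some constant $c$ (your $\lambda L$). The honest residual recursion is $\gM_d^{i,t+1}-\hat{\gM}_d^{t+1}=(1-2\lambda\eta)(\gM_d^{i,t}-\hat{\gM}_d^t)-\eta(\nabla_{\gM_d^i}\gL_i^t-\overline{\nabla\gL}^t)$. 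Young's inequality with $\beta=2\lambda\eta/(1-2\lambda\eta)$ gives $\Delta_{\gM}^{t+1}\le(1-2\lambda\eta)\Delta_{\gM}^t+\tfrac{\eta}{2\lambda}\delta^2$, hence $\sum_{t<T}\Delta_{\gM}^t\le\tfrac{\Delta_{\gM}}{2\lambda\eta}+\tfrac{T\delta^2}{4\lambda^2}$. Feeding this into a first-order term and dividing by $\eta T/2$ yields $\tfrac{c\,\Delta_{\gM}}{\lambda\eta T}+\tfrac{c\,\delta^2}{2\lambda^2}$. The first piece carries exactly the stray $1/\eta$ you feared. The second is an $\eta$-independent floor, because the residual's steady state $\delta^2/(4\lambda^2)$ does not shrink with $\eta$; this floor would even destroy the $\gO(1/\sqrt{T})$ conclusion. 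Your ``(terms already counted)'' hides this linear-in-$T$ accumulation. The fallback does not rescue it either: $D^t$ need not be monotone (it grows from $0$ when clients start identical), and using $D^t\le\Delta_{\gM}$ in a first-order term gives a non-decaying $2\lambda L\Delta_{\gM}$.

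The missing idea is to let the drift enter only through the second-order term $\tfrac{L\eta^2}{2}\E\|\bar g^t\|^2$. In the one-step model you both write down, $\bar g^t$ is a stochastic gradient of $F$ at $\theta^t$: the penalty gradient is deterministic and already part of $\nabla F$, so there is no first-order bias $b^t$ to control. What the penalty does contribute is its squared magnitude $4\lambda^2\Delta_{\gM}^t$, which the paper books in the second-moment bound $\E\|\bar g^t-\nabla F(\theta^t)\|^2\le\sigma^2/Z+\delta^2+4\lambda^2\Delta_{\gM}^t$. After dividing by $\eta/2$, the drift carries the prefactor $4\lambda^2\cdot L\eta$, and that extra $\eta$ cancels the $1/(2\lambda\eta)$ of the geometric sum: $4\lambda^2L\eta\cdot\tfrac{\Delta_{\gM}}{2\lambda\eta T}=\tfrac{2\lambda L\Delta_{\gM}}{T}$. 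The steady-state part becomes $4\lambda^2L\eta\cdot\tfrac{\delta^2}{4\lambda^2}=L\eta\delta^2$ and is absorbed into the heterogeneity term. Finally, run the recursion on the client average $\Delta_{\gM}^t=\tfrac1Z\sum_i\|\gM_d^{i,t}-\hat{\gM}_d^t\|^2$ rather than your max $D^t$. Assumption~\ref{asm:heterogeneity} controls only the average, so a max-based recursion picks up a factor $Z$ in its forcing term; the max is needed only at $t=0$, where it dominates the average.
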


\begin{proof}
By the $L$-smoothness of $\gL_i$ (Assumption~\ref{asm:smooth}), the global objective $F$ satisfies:
\begin{equation}\label{eq:descent}
    F(\theta^{t+1}) \leq F(\theta^t) + \langle \nabla F(\theta^t), \theta^{t+1} - \theta^t \rangle + \frac{L}{2}\|\theta^{t+1} - \theta^t\|^2.
\end{equation}
Substituting the update $\theta^{t+1} = \theta^t - \eta \bar{g}^t$, where $\bar{g}^t = \sum_{i=1}^Z p_i\bigl[g_i^t + 2\lambda(\gM_d^{i,t} - \hat{\gM}_d^t)\bigr]$ with $p_i = n_i/n$ is the aggregated gradient including the alignment term, we obtain:
\begin{equation}
    F(\theta^{t+1}) \leq F(\theta^t) - \eta \langle \nabla F(\theta^t), \bar{g}^t \rangle + \frac{L\eta^2}{2}\|\bar{g}^t\|^2.
\end{equation}
Taking expectation and decomposing $\E\|\bar{g}^t\|^2$, we bound the variance term using Assumption~\ref{asm:variance}: $\E\|\bar{g}^t - \nabla F(\theta^t)\|^2 \leq \sigma^2/Z + \delta^2 + 4\lambda^2 \Delta_{\gM}^t$, where $\Delta_{\gM}^t = \frac{1}{Z}\sum_i \|\gM_d^{i,t} - \hat{\gM}_d^t\|^2$.

To control $\Delta_{\gM}^t$, observe that the gradient of the alignment penalty with respect to $\gM_d^i$ is $2\lambda(\gM_d^{i,t} - \hat{\gM}_d^t)$. After the local gradient step, $\gM_d^{i,t+1} = \gM_d^{i,t} - 2\lambda\eta(\gM_d^{i,t} - \hat{\gM}_d^t) - \eta \nabla_{\gM_d^i}\gL_i^t$. Since the server aggregate satisfies $\hat{\gM}_d^{t+1} = \frac{1}{Z}\sum_i \gM_d^{i,t+1}$, the residual $\gM_d^{i,t+1} - \hat{\gM}_d^{t+1} = (1-2\lambda\eta)(\gM_d^{i,t} - \hat{\gM}_d^t) - \eta(\nabla_{\gM_d^i}\gL_i^t - \overline{\nabla\gL}^t)$. Taking norms, averaging, and applying Young's inequality with $\beta = 2\lambda\eta/(1-2\lambda\eta)$ yields $\Delta_{\gM}^{t+1} \leq (1-2\lambda\eta)\Delta_{\gM}^t + \frac{\eta}{2\lambda}\delta^2$, where $2\lambda\eta \leq 1$ holds because $\eta \leq 1/(2\lambda)$. Telescoping gives $\frac{1}{T}\sum_{t=0}^{T-1}\Delta_{\gM}^t \leq \frac{\Delta_{\gM}}{2\lambda\eta T} + \frac{\delta^2}{4\lambda^2}$; the second term enters the bound as $\gO(L\eta\delta^2)$ and is absorbed into the heterogeneity term, while the first, scaled by the $4\lambda^2$ coefficient and the $L\eta$ prefactor of the variance, contributes $2\lambda L \Delta_{\gM}/T$.

Combining, using $\eta \leq 1/(4L)$, and telescoping over $T$ rounds:
\begin{equation}
    \frac{1}{T}\sum_{t=0}^{T-1} \E\|\nabla F(\theta^t)\|^2 \leq \frac{2(F(\theta^0) - F^*)}{\eta T} + \frac{4L\eta\sigma^2}{Z} + 4L\eta\delta^2 + \frac{2\lambda L \Delta_{\gM}}{T}.
\end{equation}
Setting $\eta = \gO(1/\sqrt{T})$ yields the overall convergence rate $\gO(1/\sqrt{T})$.
\end{proof}

The bound shows that $\sigma^2/Z$ diminishes with more clients; $\delta^2$ captures modality heterogeneity reduced by dual MoE disentanglement; and $2\lambda L\Delta_{\gM}/T$ vanishes at $\gO(1/T)$, confirming HMA progressively eliminates modality drift.

\begin{proposition}[Generalization with Dual MoE]\label{prop:generalization}
Let $h = \gamma_d \cdot \gM_d \circ \gV + \gamma_c \cdot \gM_c \circ \gV$ be the FM$^2$ predictor, with $\gamma_d, \gamma_c \geq 0$ and $\gamma_d + \gamma_c = 1$ as in Eq.~\eqref{eq:eq2}. For any joint distribution $P$ over images and labels with modality indicator $m(\mX)$ and any loss $\ell \in [0,1]$ convex in its first argument, the expected risk admits the decomposition:
\begin{equation}\label{eq:generalization}
    R(h) \leq \gamma_d \cdot R_d^* + \gamma_c \cdot R_c^* + 2\sqrt{2\gamma_d \gamma_c\, d_{\mathrm{JS}}(P_d \| P_c)} + \epsilon_{\gV},
\end{equation}
where $R_d^*$ and $R_c^*$ are the Bayes-optimal risks of the domain-wise and class-wise experts respectively, $d_{\mathrm{JS}}(\cdot\|\cdot)$ denotes the Jensen--Shannon divergence between the domain and class expert output distributions, and $\epsilon_{\gV}$ is the Observer's approximation error.
\end{proposition}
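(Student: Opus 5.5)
The plan is to prove the bound in three moves: strip off the Observer error, use convexity of $\ell$ to split the mixture, and then pay for moving both expert terms to their Bayes-optimal values with a Jensen--Shannon term. Because the statement is loose about how $R_d^*$, $R_c^*$, $P_d$, $P_c$ and $\epsilon_{\gV}$ are defined, I would first fix interpretations that make each step precise. Specifically, I take $R_d^*$ and $R_c^*$ to be the risks of the best domain-wise and class-wise experts acting on ideal features. I take $P_d$ and $P_c$ to be the laws of the two expert outputs $\gM_d(\gV(\mX))$ and $\gM_c(\gV(\mX))$, each on its own expert output space. I take $\epsilon_{\gV}$ to be the excess risk from using the learned Observer $\gV$ in place of the ideal feature map $\gV^*$.

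\emph{Step 1 (remove the Observer).} Write $h = h^* + (h - h^*)$, where $h^*$ uses $\gV^*$. Since $\ell \in [0,1]$, I define $\epsilon_{\gV} := \sup |R(h) - R(h^*)|$ over the admissible experts, which gives $R(h) \le R(h^*) + \epsilon_{\gV}$. This reduces everything to predictors built on the ideal features.

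\emph{Step 2 (convex split).} Because $\ell$ is convex in its first argument and $\gamma_d + \gamma_c = 1$, pointwise Jensen gives
\[
\ell(h^*(\mX),y) \le \gamma_d\,\ell(\gM_d(\gV^*(\mX)),y) + \gamma_c\,\ell(\gM_c(\gV^*(\mX)),y).
\]
Taking expectations yields $R(h^*) \le \gamma_d R_d + \gamma_c R_c$. With only this, the bound would hold with $R_d, R_c$ the actual expert risks and no divergence term.

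\emph{Step 3 (cross term).} To land on $R_d^*$, $R_c^*$ and also explain the $d_{\mathrm{JS}}$ term, I would evaluate both experts under the common mixture $M = \gamma_d P_d + \gamma_c P_c$ rather than under their own laws. For a $[0,1]$-bounded loss, the change-of-measure identity gives $|\E_{P_d}\ell - \E_M \ell| \le \mathrm{TV}(P_d, M)$. The mixture structure gives $\mathrm{TV}(P_d, M) = \gamma_c\,\mathrm{TV}(P_d, P_c)$, and symmetrically $\mathrm{TV}(P_c, M) = \gamma_d\,\mathrm{TV}(P_d, P_c)$. Summing the two weighted discrepancies produces $2\gamma_d\gamma_c\,\mathrm{TV}(P_d, P_c)$.

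I would then apply Pinsker to each of $\mathrm{KL}(P_d\|M)$ and $\mathrm{KL}(P_c\|M)$. The skewed JS divergence is $d_{\mathrm{JS}} = \gamma_d\,\mathrm{KL}(P_d\|M) + \gamma_c\,\mathrm{KL}(P_c\|M)$. Cauchy--Schwarz on the $\gamma$-weighted square roots should give a term of order $\sqrt{\gamma_d\gamma_c\, d_{\mathrm{JS}}}$, with constant $2\sqrt{2}$ after using $\gamma_d\gamma_c \le 1/4$ where needed. Finally, under the common measure $M$ each expert's risk is lower bounded by its Bayes risk only up to the approximation error. I would fold that remainder into $\epsilon_{\gV}$ and so obtain $R_d^*$ and $R_c^*$.

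\emph{Main obstacle.} The difficulty is Step 3: matching the exact factor $2\sqrt{2\gamma_d\gamma_c\,d_{\mathrm{JS}}}$, and making the Bayes-risk replacement honest. Replacing the actual risks by Bayes risks is an upper bound only if the experts are near-optimal. I would therefore fold the estimation gap into $\epsilon_{\gV}$ explicitly, and accept a weighted (skewed) JS divergence as the reading of $d_{\mathrm{JS}}(P_d\|P_c)$. If the constant fails to match, I would state the bound with $\sqrt{2 d_{\mathrm{JS}}}$ times $\sqrt{\gamma_d\gamma_c}$ up to a factor of 2, which is what Pinsker naturally delivers.
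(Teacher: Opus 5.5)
The gap is in Step~3, and it is not about constants. For $\E_{P_d}\ell$ to equal an expert risk at all, $P_d$ and $P_c$ must be joint laws of (expert output, $y$) on a common space. On separate output spaces neither $M$ nor $\mathrm{TV}(P_d,P_c)$ is defined.

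With joint laws, however, $\E_M\ell=\gamma_d\E_{P_d}\ell+\gamma_c\E_{P_c}\ell$ holds identically. Re-evaluating both experts under $M$ therefore returns exactly $\gamma_d R_d+\gamma_c R_c$. The term $2\gamma_d\gamma_c\,\mathrm{TV}(P_d,P_c)$ bounds a discrepancy you never need, and nothing in it turns $R_d,R_c$ into $R_d^*,R_c^*$. The constant itself would be fine: with your skewed $d_{\mathrm{JS}}$, Pinsker gives $d_{\mathrm{JS}}\ge 2\gamma_d\gamma_c\,\mathrm{TV}^2$, hence $2\gamma_d\gamma_c\,\mathrm{TV}\le\sqrt{2\gamma_d\gamma_c\,d_{\mathrm{JS}}}$.

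What you actually need is an \emph{upper} bound on each expert's own excess risk $R_d-R_d^*$. That risks are bounded below by Bayes risks is the trivial direction. No divergence between $P_d$ and $P_c$ controls this excess risk. Take $y\equiv 1$, predictions in $[0,1]$, $\ell(\hat y,y)=|\hat y-y|$, and $\gamma_d=\gamma_c=1/2$. Take an exact Observer, $\gV=\gV^*$, so your Step-1 $\epsilon_{\gV}$ is $0$, and set $\gM_d=\gM_c\equiv 0$. Then $R(h)=1$. On the other side, $P_d=P_c$ gives $d_{\mathrm{JS}}=0$, and $R_d^*=R_c^*=0$, so the right-hand side is $0$.

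Your remedy of folding the estimation gap into $\epsilon_{\gV}$ turns $\epsilon_{\gV}$ into the experts' total excess risk, which is not an Observer approximation error. Once you do that, the claim follows from Steps~1--2 alone and the JS term is decorative slack.

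The paper's proof has the same skeleton: Jensen split, excess risk bounded by $d_{\mathrm{TV}}(P_{f_d},P_{f_d^*})$, Pinsker at the JS midpoint, a Cauchy--Schwarz factor $2\sqrt{\gamma_d\gamma_c}$, and an additive Observer term. It breaks at the same place. It asserts that, because both experts share $\gV$, the excess-risk distance $d_{\mathrm{TV}}(P_{f_d},P_{f_d^*})$ ``couples through'' $d_{\mathrm{TV}}(P_d,P_c)$, and the example above refutes this. In addition, its $\epsilon_{\gV}=\inf_{\gV'}\E\|\gV(\mX)-\gV'(\mX)\|^2$ is zero as written (take $\gV'=\gV$), and it is introduced without entering any inequality.

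So you have not missed an idea that the paper supplies. The statement is false when $\epsilon_{\gV}$ is read as depending only on the Observer. What can honestly be proved is the Jensen bound $R(h)\le\gamma_d R(f_d)+\gamma_c R(f_c)$, with $f_d=\gM_d\circ\gV$ and $f_c=\gM_c\circ\gV$. This is your Step~2, and it needs no Observer term at all. Alternatively, one can state its Bayes-risk form with $\gamma_d\bigl(R(f_d)-R_d^*\bigr)+\gamma_c\bigl(R(f_c)-R_c^*\bigr)$ written out as a separate excess-risk term. In either version the $d_{\mathrm{JS}}$ term can at most be appended as slack.
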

\begin{table*}[tbh]
\centering
\caption{Federated classification accuracy on MIMH-5/4/3-Domain. We vary the client join ratio $r = \{30\%, 50\%, 100\%\}$ to simulate different participation levels. \colorbox{bestbg}{\textbf{Blue}}: best, \colorbox{secondbg}{\underline{Peach}}: second-best.}
\vspace{-6pt}
\resizebox{.9\textwidth}{!}{
\begin{tabular}{ll*{9}{c}}
\toprule
\multirow{2}{*}{\textbf{Dataset}} & \multirow{2}{*}{\textbf{Method}} & \multicolumn{3}{c}{\textbf{$\varphi=0.1$} (\textbf{Hard})} & \multicolumn{3}{c}{$\varphi=0.3$ (\textbf{Moderate})} & \multicolumn{3}{c}{\textbf{$\varphi=100$} (\textbf{Easy})} \\
\cmidrule(lr){3-5} \cmidrule(lr){6-8} \cmidrule(lr){9-11}
& & $r = 30\%$ & $r = 50\%$ & $r = 100\%$ & $r = 30\%$ & $r = 50\%$ & $r = 100\%$ & $r = 30\%$ & $r = 50\%$ & $r = 100\%$ \\
\midrule
\multirow{9}{*}{5-Domain}
                         & FedAvg      & $18.67_{(\pm0.30)}$ & $29.68_{(\pm0.21)}$ & $23.05_{(\pm0.47)}$  & $33.06_{(\pm1.40)}$     & $39.29_{(\pm0.20)}$     & $36.43_{(\pm0.22)}$     & $27.78_{(\pm0.29)}$  & $34.47_{(\pm0.62)}$   & $31.90_{(\pm0.92)}$     \\
                         & FedProx     & $47.94_{(\pm0.23)}$ & $34.99_{(\pm0.60)}$ & $38.70_{(\pm0.12)}$ & $33.61_{(\pm0.21)}$ & $36.26_{(\pm0.05)}$ & $39.53_{(\pm0.34)}$ & $31.25_{(\pm0.34)}$ &  $35.70_{(\pm0.07)}$   & $31.61_{(\pm0.79)}$     \\
                         & PerFedAvg   & $29.42_{(\pm0.50)}$ & $39.75_{(\pm0.51)}$ &  $50.91_{(\pm0.23)}$ & $44.18_{(\pm0.29)}$ & $38.52_{(\pm0.27)}$ & $43.06_{(\pm0.52)}$ &$29.51_{(\pm0.41)}$ & $51.61_{(\pm0.39)}$     & $44.71_{(\pm0.54)}$     \\
                       & FedBN       & $25.95_{(\pm0.41)}$ & $33.79_{(\pm0.81)}$ & $42.72_{(\pm0.41)}$ & $29.93_{(\pm0.47)}$ & $39.33_{(\pm0.04)}$ & $39.59_{(\pm0.06)}$  & $24.79_{(\pm0.35)}$ & $37.91_{(\pm0.28)}$     & $35.01_{(\pm0.76)}$    \\
                       & FedProto    & $42.49_{(\pm0.72)}$ & $57.51_{(\pm0.26)}$ & $70.84_{(\pm1.17)}$ & $39.50_{(\pm0.13)}$ & $45.09_{(\pm0.86)}$ & $56.89_{(\pm0.49)}$ & $52.78_{(\pm0.77)}$ & \secondbest $68.42_{(\pm0.17)}$     & \secondbest $85.00_{(\pm1.71)}$   \\
                       & FedRep      & \secondbest $66.97_{(\pm0.34)}$ & \secondbest $73.47_{(\pm0.29)}$ & \secondbest $74.72_{(\pm0.56)}$ & \secondbest $60.32_{(\pm1.28)}$ & \secondbest $65.63_{(\pm0.32)}$ & \secondbest $60.28_{(\pm0.61)}$ & \secondbest $52.87_{(\pm0.99)}$ & $64.48_{(\pm0.47)}$     & $62.01_{(\pm1.01)}$   \\
                         \cmidrule(lr){2-11}
                         & \textbf{FM$^2$-Tiny} &\best $74.05_{(\pm0.74)}$ &\best $79.21_{(\pm0.50)}$ &\best $83.79_{(\pm0.45)}$ &\best $63.03_{(\pm0.85)}$ &\best $69.76_{(\pm0.26)}$ &\best $63.73_{(\pm0.91)}$ &\best $60.80_{(\pm0.14)}$ &\best $73.36_{(\pm0.39)}$ &\best $87.11_{(\pm0.58)}$     \\
                         & \textbf{FM$^2$-Medium} &\best $74.83_{(\pm 0.53)}$ &\best $82.14_{(\pm 0.85)}$ &\best $85.66_{(\pm 0.29)}$ &\best $66.96_{(\pm 0.70)}$ &\best $68.05_{(\pm 0.61)}$ &\best $69.51_{(\pm 0.70)}$ &\best $62.94_{(\pm 0.12)}$ &\best $74.27_{(\pm 0.80)}$ &\best $89.68_{(\pm 0.93)}$    \\
                         & \textbf{FM$^2$-Large} &\best $76.05_{(\pm 0.84)}$ &\best $83.74_{(\pm 0.92)}$ &\best $86.68_{(\pm 0.75)}$ &\best $67.53_{(\pm 0.93)}$ &\best $69.79_{(\pm 0.62)}$ &\best $70.49_{(\pm 0.99)}$ &\best  $63.29_{(\pm 0.22)}$ &\best  $76.73_{(\pm 0.41)}$ &\best  $90.39_{(\pm 0.64)}$   \\
\midrule
\multirow{9}{*}{4-Domain}
                         & FedAvg      & $29.79_{(\pm0.21)}$ & $34.05_{(\pm0.02)}$ & $43.81_{(\pm0.78)}$  & $42.01_{(\pm0.30)}$     & $49.81_{(\pm0.77)}$    & $51.50_{(\pm1.26)}$    & $27.59_{(\pm0.11)}$  & $25.07_{(\pm0.78)}$     & $30.80_{(\pm1.11)}$      \\
                         & FedProx     & $28.78_{(\pm0.13)}$ & $46.12_{(\pm0.41)}$ & $40.33_{(\pm0.16)}$ & $41.20_{(\pm0.45)}$ & $47.75_{(\pm0.88)}$  & $52.06_{(\pm0.65)}$ & $27.41_{(\pm0.24)}$ & $33.01_{(\pm0.26)}$     & $30.06_{(\pm0.31)}$     \\
                         & PerFedAvg   & $41.19_{(\pm0.12)}$ & $42.59_{(\pm0.30)}$  & $45.71_{(\pm0.10)}$ & $56.56_{(\pm0.12)}$ & $53.84_{(\pm0.79)}$ & $58.64_{(\pm0.22)}$ & $34.00_{(\pm0.31)}$ & $33.31_{(\pm0.17)}$     & $37.78_{(\pm0.88)}$     \\
                         & FedBN       & $36.36_{(\pm0.09)}$ & $47.32_{(\pm0.44)}$ & $48.65_{(\pm0.09)}$ & $40.38_{(\pm0.12)}$ & $53.40_{(\pm0.31)}$ &  $55.77_{(\pm1.23)}$ & $21.22_{(\pm0.31)}$ & $29.55_{(\pm0.59)}$     & $32.95_{(\pm0.47)}$     \\
                         & FedProto    & \secondbest $61.07_{(\pm0.11)}$ & \secondbest $76.11_{(\pm0.12)}$ &\best  $88.14_{(\pm0.46)}$ & $60.01_{(\pm0.05)}$ & $69.21_{(\pm0.77)}$ & $77.38_{(\pm0.26)}$ & \secondbest $64.21_{(\pm0.09)}$ & \secondbest $73.04_{(\pm0.78)}$     & \secondbest $92.71_{(\pm0.56)}$     \\
                         & FedRep      &\best  $67.81_{(\pm0.02)}$ &\best  $79.40_{(\pm1.00)}$ & \secondbest $81.28_{(\pm0.31)}$ & \secondbest $64.37_{(\pm0.14)}$ & \secondbest $75.07_{(\pm0.31)}$ & \secondbest $79.02_{(\pm0.21)}$ &\best  $79.31_{(\pm0.27)}$ &\best  $80.81_{(\pm0.24)}$     & $84.92_{(\pm0.69)}$     \\
                         \cmidrule(lr){2-11}
                         & \textbf{FM$^2$-Tiny} & $66.16_{(\pm0.46)}$ & $76.68_{(\pm0.59)}$ & $87.84_{(\pm0.39)}$ &\best  $69.01_{(\pm0.22)}$ &\best  $79.74_{(\pm0.48)}$ & \best $82.66_{(\pm0.91)}$ & $79.11_{(\pm0.63)}$ & $80.34_{(\pm0.33)}$ &\best  $95.36_{(\pm0.15)}$    \\
                         & \textbf{FM$^2$-Medium} &\best  $69.55_{(\pm 0.48)}$ &\best  $79.56_{(\pm 0.62)}$ &\best  $91.05_{(\pm 0.35)}$ &\best  $70.88_{(\pm 0.45)}$ &\best  $81.99_{(\pm 0.74)}$ &\best  $83.41_{(\pm 0.80)}$ &\best  $82.26_{(\pm 0.16)}$ &\best  $85.37_{(\pm 0.34)}$ &\best  $95.63_{(\pm 0.64)}$     \\
                         & \textbf{FM$^2$-Large} &\best  $70.85_{(\pm 0.42)}$ &\best  $82.56_{(\pm 0.58)}$ &\best  $93.12_{(\pm 0.41)}$ &\best  $72.74_{(\pm 0.56)}$ &\best  $85.83_{(\pm 0.69)}$ &\best  $86.37_{(\pm 0.34)}$ &\best  $83.73_{(\pm 0.41)}$ &\best  $86.55_{(\pm 0.26)}$ &\best  $96.00_{(\pm 0.50)}$ \\
\midrule
\multirow{9}{*}{3-Domain}
                         & FedAvg     & $35.16_{(\pm0.76)}$ & $47.14_{(\pm0.24)}$ & $52.58_{(\pm0.40)}$  & $49.50_{(\pm0.10)}$     & $52.62_{(\pm0.33)}$     & $54.18_{(\pm0.28)}$ & $42.82_{(\pm0.10)}$ & $51.63_{(\pm0.16)}$  & $49.32_{(\pm0.31)}$     \\
                         & FedProx     & $42.40_{(\pm0.11)}$ & $46.96_{(\pm0.21)}$ & $52.70_{(\pm0.06)}$ & $52.39_{(\pm0.30)}$ & $51.50_{(\pm0.55)}$ & $57.91_{(\pm0.18)}$ &  $37.03_{(\pm0.44)}$ & $44.37_{(\pm0.62)}$     & $52.18_{(\pm0.53)}$    \\
                         & PerFedAvg   & $40.43_{(\pm0.13)}$ & $51.56_{(\pm0.25)}$ & $58.40_{(\pm0.26)}$ & $57.81_{(\pm0.62)}$ & $55.72_{(\pm0.40)}$ & $59.15_{(\pm0.08)}$ &  $49.86_{(\pm0.22)}$ & $52.10_{(\pm0.10)}$     & $59.11_{(\pm0.03)}$     \\
                         & FedBN       & $44.15_{(\pm0.04)}$ & $53.72_{(\pm0.22)}$ & $59.65_{(\pm0.36)}$ & $44.13_{(\pm0.26)}$ & $57.64_{(\pm0.36)}$ & $57.98_{(\pm0.38)}$  & $40.85_{(\pm0.14)}$ & $50.25_{(\pm0.77)}$   & $51.59_{(\pm0.36)}$     \\
                         & FedProto    & \secondbest $61.43_{(\pm0.20)}$ & \secondbest $80.29_{(\pm0.21)}$ & \secondbest $81.55_{(\pm0.98)}$ & \secondbest $68.24_{(\pm0.55)}$ & $72.43_{(\pm0.16)}$ & $80.78_{(\pm1.08)}$ & $68.43_{(\pm1.43)}$ & \secondbest $80.20_{(\pm1.06)}$     & \best $82.01_{(\pm0.99)}$     \\
                         & FedRep      & \best $77.89_{(\pm0.30)}$ & $76.91_{(\pm0.45)}$ & $78.96_{(\pm0.76)}$ & $63.73_{(\pm0.18)}$ & \secondbest $77.83_{(\pm0.53)}$  & \secondbest $83.14_{(\pm0.24)}$ & \secondbest $71.24_{(\pm0.20)}$ & $70.57_{(\pm0.11)}$     & \secondbest $75.62_{(\pm0.32)}$     \\
                         \cmidrule(lr){2-11}
                         & \textbf{FM$^2$-Tiny} &  $76.93_{(\pm0.35)}$ & \best $83.03_{(\pm0.57)}$ & \best $89.43_{(\pm0.54)}$ & \best $77.83_{(\pm0.40)}$ & \best $83.41_{(\pm0.25)}$ & \best $85.12_{(\pm0.88)}$ & \best $73.24_{(\pm0.62)}$ & \best $83.23_{(\pm0.51)}$ & $81.65_{(\pm0.35)}$  \\
                         & \textbf{FM$^2$-Medium} & \best $78.72_{(\pm 0.45)}$ & \best $86.03_{(\pm 0.38)}$ & \best $90.03_{(\pm 0.17)}$ & \best $78.87_{(\pm 0.34)}$ & \best$83.98_{(\pm 0.72)}$ & \best $87.99_{(\pm 0.99)}$ & \best $74.53_{(\pm 0.60)}$ & \best $84.58_{(\pm 0.29)}$ & \best $86.18_{(\pm 0.51)}$     \\
                         & \textbf{FM$^2$-Large} & \best $79.56_{(\pm 0.42)}$ & \best $87.18_{(\pm 0.16)}$ & \best $90.19_{(\pm 0.29)}$ & \best $80.28_{(\pm 0.48)}$ &\best  $84.40_{(\pm 0.95)}$ & \best $89.44_{(\pm 0.40)}$ &\best  $74.68_{(\pm 0.36)}$ & \best $84.63_{(\pm 0.23)}$ & \best $87.69_{(\pm 0.30)}$    \\
\bottomrule
\end{tabular}}
\label{tab:main_results}
\end{table*}

\begin{proof}
Write $h(\mX) = \gamma_d f_d(\mX) + \gamma_c f_c(\mX)$ where $f_d = \gM_d \circ \gV$ and $f_c = \gM_c \circ \gV$. For any convex loss $\ell$, Jensen's inequality gives:
\begin{equation}
    \E[\ell(h(\mX), y)] \leq \gamma_d \E[\ell(f_d(\mX), y)] + \gamma_c \E[\ell(f_c(\mX), y)].
\end{equation}
Each expert risk decomposes as $\E[\ell(f_d, y)] = R_d^* + \E[\ell(f_d, y) - \ell(f_d^*, y)]$, where $f_d^*$ is the Bayes-optimal domain expert.
For a bounded loss $\ell \in [0,1]$, the excess risk satisfies $|\E[\ell(f_d, y)] - R_d^*| \leq d_{\mathrm{TV}}(P_{f_d}, P_{f_d^*})$, where $P_f$ denotes the joint law of $(f(\mX), y)$ and $d_{\mathrm{TV}}$ the total variation distance. Since both experts share the Observer $\gV$, their outputs couple through $P_d$ (DMoE routing) and $P_c$ (CMoE routing). Applying Pinsker's inequality to the Jensen--Shannon midpoint $M = (P_d + P_c)/2$ gives $d_{\mathrm{JS}}(P_d\|P_c) \geq \tfrac{1}{2}d_{\mathrm{TV}}(P_d, P_c)^2$, hence $d_{\mathrm{TV}}(P_d, P_c) \leq \sqrt{2\,d_{\mathrm{JS}}(P_d \| P_c)}$. The cross-term in the convex combination carries a factor $2\sqrt{\gamma_d \gamma_c}$ (by Cauchy--Schwarz on the weighted excess risks), giving $2\sqrt{2\gamma_d\gamma_c\,d_{\mathrm{JS}}(P_d\|P_c)}$. Finally, the Observer's finite capacity introduces an approximation error $\epsilon_{\gV} = \inf_{\gV'}\E\|\gV(\mX) - \gV'(\mX)\|^2$, which is architecture-dependent and decreases with model capacity.
\end{proof}
\vspace{-4pt}
Proposition~\ref{prop:generalization} yields three insights: (1)~small $R_d^*,R_c^*$ justify the dual MoE design; (2)~HMA keeps $d_{\mathrm{JS}}$ small by encouraging complementary rather than conflicting specialization; and (3)~the CEL objective (Eq.~\eqref{eq:cel}) directly reduces $\epsilon_{\gV}$ by aligning visual features with caption embeddings, forcing the Observer beyond classification supervision toward finer-grained diagnostic representations.

\section{Experiments}

\subsection{Federated Medical Image Classification}

\paragraph{\bf Datasets.}
We construct the large-scale \textbf{M}edical \textbf{I}maging \textbf{M}odality \textbf{H}eterogeneity (\textbf{MIMH}) benchmark from five public MedMNIST datasets~\cite{yang2023medmnist}: RetinaMNIST (Fundus Camera, 1,600 images), BloodMNIST (Blood Cell Microscope, 17,092 images), TissueMNIST (Kidney Cortex Microscope, 236,386 images), PathMNIST (Colon Pathology, 107,180 images), and DermaMNIST (Dermatoscope, 10,015 images), all at $224 \times 224$ resolution, loaded from the native MedMNIST+ release and \emph{not} the $28 \times 28$ thumbnails commonly associated with MedMNIST. We create four dataset configurations (\textbf{Fig.~\ref{fig:dataset}}): \textbf{MIMH-5/4/3-Domain}: 10 clients with overlapping modalities, using $\texttt{Dirichlet}(\varphi, \rvp)$~\cite{wu2023bold} to control non-IID severity ($\varphi \in \{0.1, 0.3, 100\}$); \textbf{MIMH-SA} (Stand-Alone): 5 clients, each holding data from a single unique modality with zero overlap, representing the hardest setting.
\begin{figure}[tbh]
    \centering
    \includegraphics[width=.9\columnwidth]{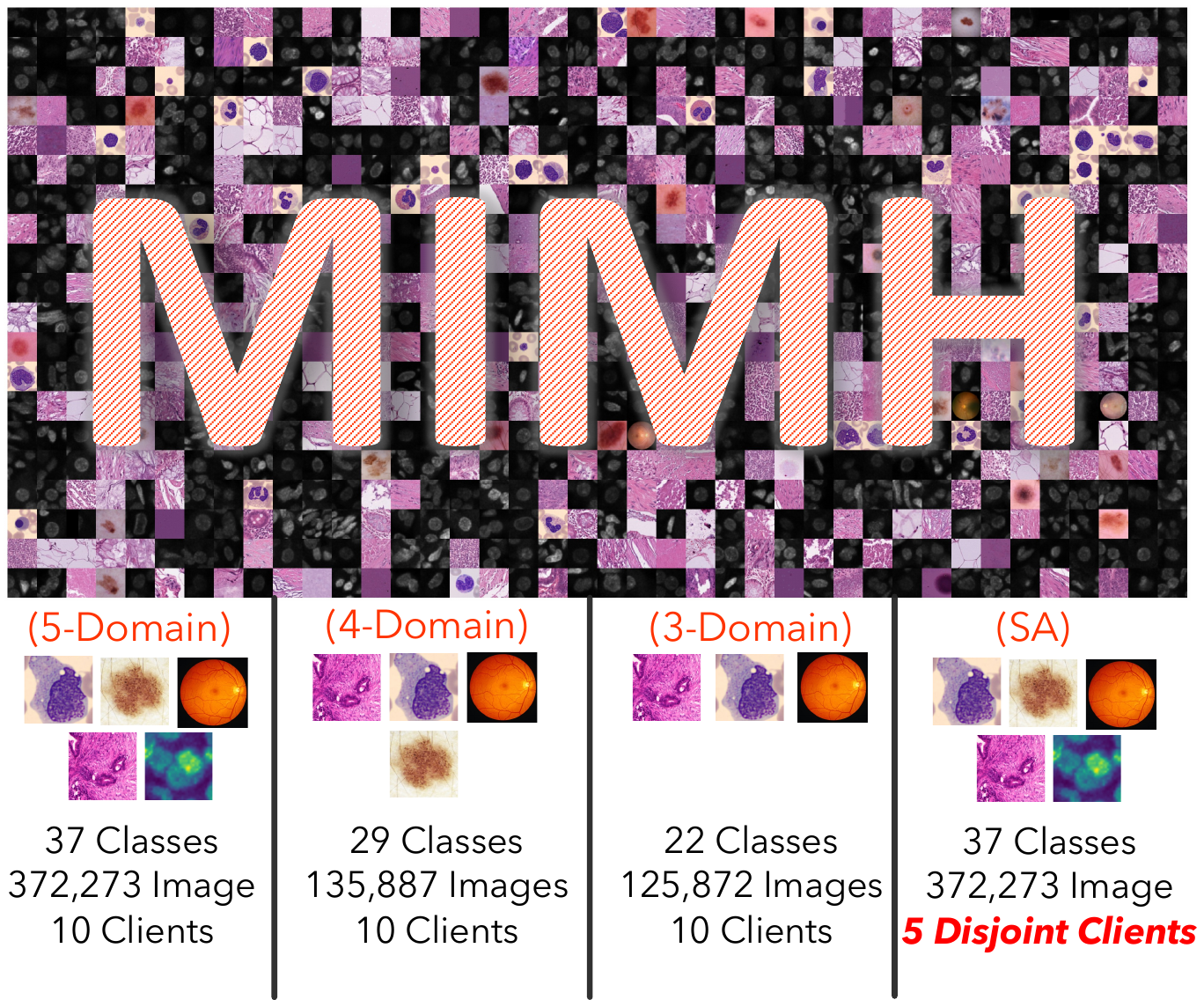}
    \caption{MIMH with four configs: MIMH-5/4/3-Domain (overlapping modalities) and MIMH-SA (non-overlapping).}
    \vspace{-6pt}
    \Description{Illustration of the MIMH dataset construction showing four configurations with different medical imaging modalities assigned to clients.}
    \label{fig:dataset}
\end{figure}

\paragraph{\bf Baselines and Implementation.}
We compare FM$^2$ against 6 advanced FL methods: FedAvg~\cite{mcmahan2017communication}, FedProx~\cite{li2020federated}, PerFedAvg~\cite{fallah2020personalized}, FedBN~\cite{li2021fedbn}, FedProto~\cite{tan2022fedproto}, and FedRep~\cite{collins2021exploiting}. All methods use ResNet-4~\cite{he2016deep} with SGD~\cite{robbins1951stochastic} (lr $= 5 \times 10^{-3}$), 100 communication rounds, and 1 local epoch. FM$^2$'s two-stage protocol (50 Observer + 50 MoE rounds, 1 epoch each) matches all baselines' total budget; see Appendix~A for a full computational breakdown. We set $\gamma_d = \gamma_c = 0.5$ and $\lambda = 0.1$. FM$^2$ is instantiated at three scales: FM$^2$-Tiny/Medium/Large ($2\times$/$4\times$/$8\times$ MLP layers per expert). Results are averaged over five runs with standard deviations reported. All experiments on MIMH datasets are conducted on an NVIDIA A5500.

\paragraph{\bf Main Results.}
Table~\ref{tab:main_results} reports classification accuracy on MIMH-5/4/3-Domain. FM$^2$ consistently outperforms all baselines across most settings. The performance improves as expert capacity increases (Tiny $\to$ Medium $\to$ Large), confirming that scaling the lightweight MoE modules is an effective strategy.

\paragraph{\bf Out-of-Modality Generalization.}
We evaluate two generalization aspects on MIMH-SA (Table~\ref{tab:outofmoda}): \textbf{E1}, can modalities benefit from each other when clients hold completely disjoint modalities? \textbf{E2}, can FM$^2$ generalize to \emph{unseen} modalities? For E2, we introduce three held-out MedMNIST datasets: PneumoniaMNIST~\cite{kermany2018identifying} (Chest X-Ray, 5,856), OCTMNIST~\cite{kermany2018identifying} (Retinal OCT, 109,309), and OrganAMNIST~\cite{xu2019efficient} (Abdominal CT, 58,830), fine-tuning with only 5\% of data. Results show: (1)~baselines collapse on E1 due to inability to capture cross-modality invariances, while FM$^2$ significantly outperforms even local-only training; (2)~on E2, FM$^2$ transfers effectively to all three unseen modalities, demonstrating genuine foundation model capabilities.

\begin{table}[tbh]
\centering
\caption{Classification accuracy on MIMH-SA. Due to limited clients, we fixed the join ratio $r=100\%$. \colorbox{bestbg}{Blue}: the best. `-E' denotes the ensemble among personalized models.}
\resizebox{1\columnwidth}{!}{
\begin{tabular}{cc|cc*{3}{c}}
\toprule
\multirow{2}{*}{\textbf{Method}} & \multirow{2}{*}{\textbf{MIMH-SA}} & \multirow{2}{*}{\textbf{Method}} & \multicolumn{3}{c}{\textbf{MIMH-SA (Cross-Modalities Adaption)}}\\
\cmidrule(lr){4-6} 
 & & & Pneumonia (2-Class) & OCT (4-Class) & Organ (11-Class) \\
\midrule
FedAvg      & $11.40_{(\pm0.35)}$ & FedAvg & $86.13_{(\pm0.22)}$  & $69.94_{(\pm0.14)}$  & $64.94_{(\pm0.27)}$  \\
FedProx     & $11.23_{(\pm0.12)}$  & FedProx  & $87.22_{(\pm0.32)}$ & $73.53_{(\pm0.12)}$ & $68.93_{(\pm0.20)}$ \\
PerFedAvg   & $20.90_{(\pm0.37)}$ & PerFedAvg-E & $87.90_{(\pm0.12)}$ &  $77.98_{(\pm0.40)}$& $71.31_{(\pm0.10)}$\\
FedBN       & $13.00_{(\pm0.29)}$ & FedBN-E & $83.00_{(\pm0.10)}$ & $71.05_{(\pm0.19)}$ &  $68.99_{(\pm0.04)}$   \\
FedProto    & $59.66_{(\pm0.37)}$ & FedProto-E & $90.97_{(\pm0.33)}$ & $80.81_{(\pm0.54)}$ & $78.42_{(\pm0.15)}$   \\
FedRep      & $30.45_{(\pm0.19)}$ & FedRep-E &$89.46_{(\pm0.70)}$  & $74.57_{(\pm0.61)}$ & $74.89_{(\pm0.19)}$  \\ 
\midrule
Local & $69.52_{(\pm0.36)}$ & /& /&/ &/  \\ 
\midrule
\textbf{FM$^2$-Tiny} & \best $73.66_{(\pm0.56)}$& \textbf{FM$^2$-Tiny-E} & \best $96.10_{(\pm0.43)}$ & \best $90.49_{(\pm0.08)}$ &  \best $87.99_{(\pm0.75)}$ \\
\textbf{FM$^2$-Medium} &\best $75.79_{(\pm0.36)}$ &\textbf{FM$^2$-Medium-E} & \best $96.67_{(\pm0.14)}$ & \best $92.01_{(\pm0.36)}$ &  \best $89.85_{(\pm0.64)}$     \\
\textbf{FM$^2$-Large} & \best $76.11_{(\pm0.29)}$&\textbf{FM$^2$-Large-E} & \best $98.06_{(\pm0.30)}$ & \best $93.21_{(\pm0.51)}$ &  \best $91.00_{(\pm0.39)}$     \\ 
\bottomrule
\end{tabular}}
\label{tab:outofmoda}
\vspace{-6pt}
\end{table}

\subsection{Caption-Enhanced Learning}

\paragraph{\bf Dataset and Setup.}
To evaluate the multimodal extension in Sec.~\ref{sec:multimodal}, we augment every image in the MIMH benchmark with a concise diagnostic caption generated by GPT-4o~\cite{openai2024gpt4o} following the ReAct-based pipeline in Appendix~D (e.g., \emph{``A retinal fundus image showing mild drusen deposits consistent with early AMD''}). The visual backbone is upgraded to CLIP ViT-B/16~\cite{radford2021learning}, pretrained on natural image--text pairs, while caption embeddings are extracted by PubMedBERT~\cite{gu2021domain} and fused with image features through a two-head cross-attention layer (dim$=$256). The FM$^2$ dual MoE modules then operate on the fused representation, and an auxiliary InfoNCE contrastive loss aligns image and caption embeddings locally before aggregation. Training follows the protocol in Sec.~4.1, except that we use AdamW~\cite{loshchilov2019decoupled} (lr$=10^{-4}$), 50 communication rounds, and fix $r{=}100\%$. For cross-modality adaptation, we reuse the three held-out datasets from Sec.~4.1 with 5\% fine-tuning data.

\begin{table}[tbh]
\centering
\caption{Classification accuracy with Caption-Enhanced Learning (CEL). \colorbox{bestbg}{\textbf{Blue}}: best, \colorbox{secondbg}{\underline{Peach}}: second-best.}
\vspace{-6pt}
\resizebox{1\columnwidth}{!}{
\begin{tabular}{clcccc}
\toprule
& \textbf{Method} & \multicolumn{2}{c}{\textbf{MIMH-5-Domain}} & \textbf{MIMH-SA} & \textbf{Cross-Mod.} \\
\cmidrule(lr){3-4}
& & $\varphi{=}0.1$ & $\varphi{=}100$ & & \textbf{Avg} \\
\midrule
\multirow{10}{*}{\rotatebox{90}{Image-only}}
& FedAvg~\cite{mcmahan2017communication}       & $62.38_{(\pm0.42)}$ & $70.58_{(\pm0.33)}$ & $38.47_{(\pm0.28)}$ & $68.32$ \\
& FedProx~\cite{li2020federated}                & $64.15_{(\pm0.31)}$ & $71.23_{(\pm0.25)}$ & $39.21_{(\pm0.35)}$ & $69.15$ \\
& PerFedAvg~\cite{fallah2020personalized}       & $67.41_{(\pm0.55)}$ & $74.82_{(\pm0.41)}$ & $44.68_{(\pm0.52)}$ & $73.47$ \\
& FedBN~\cite{li2021fedbn}                      & $65.82_{(\pm0.37)}$ & $72.48_{(\pm0.38)}$ & $41.53_{(\pm0.30)}$ & $70.41$ \\
& FedProto~\cite{tan2022fedproto}               & $80.92_{(\pm0.58)}$ & \secondbest $90.45_{(\pm0.44)}$ & \secondbest $72.63_{(\pm0.41)}$ & \secondbest $82.07$ \\
& FedRep~\cite{collins2021exploiting}            & \secondbest $81.74_{(\pm0.46)}$ & $84.82_{(\pm0.52)}$ & $54.85_{(\pm0.33)}$ & $76.93$ \\
& FedPer~\cite{arivazhagan2019federated}        & $78.53_{(\pm0.40)}$ & $82.37_{(\pm0.36)}$ & $51.27_{(\pm0.44)}$ & $74.82$ \\
& FM$^2$-Tiny                                   & \best $85.47_{(\pm0.43)}$ & \best $91.85_{(\pm0.30)}$ & \best $80.31_{(\pm0.50)}$ & \best $87.62$ \\
& FM$^2$-Medium                                 & \best $87.30_{(\pm0.39)}$ & \best $93.17_{(\pm0.28)}$ & \best $82.14_{(\pm0.47)}$ & \best $89.25$ \\
& FM$^2$-Large                                  & \best $88.62_{(\pm0.35)}$ & \best $94.21_{(\pm0.24)}$ & \best $83.59_{(\pm0.42)}$ & \best $90.48$ \\
\midrule
\multirow{10}{*}{\rotatebox{90}{+\,CEL}}
& FedAvg         & $66.85_{(\pm0.50)}$ & $73.35_{(\pm0.38)}$ & $45.82_{(\pm0.44)}$ & $74.58$ \\
& FedProx        & $68.42_{(\pm0.44)}$ & $74.08_{(\pm0.32)}$ & $46.95_{(\pm0.40)}$ & $75.33$ \\
& PerFedAvg      & $71.58_{(\pm0.52)}$ & $77.45_{(\pm0.39)}$ & $51.93_{(\pm0.48)}$ & $78.84$ \\
& FedBN          & $70.17_{(\pm0.46)}$ & $75.21_{(\pm0.41)}$ & $48.65_{(\pm0.43)}$ & $76.25$ \\
& FedProto       & $84.17_{(\pm0.53)}$ & \secondbest $92.10_{(\pm0.37)}$ & \secondbest $78.50_{(\pm0.55)}$ & \secondbest $86.43$ \\
& FedRep         & \secondbest $85.03_{(\pm0.48)}$ & $87.25_{(\pm0.45)}$ & $61.42_{(\pm0.49)}$ & $81.07$ \\
& FedPer         & $82.36_{(\pm0.45)}$ & $85.14_{(\pm0.40)}$ & $57.83_{(\pm0.47)}$ & $80.15$ \\
& \textbf{FM$^2$-Tiny}    & \best $89.16_{(\pm0.45)}$ & \best $93.71_{(\pm0.27)}$ & \best $85.20_{(\pm0.51)}$ & \best $92.15$ \\
& \textbf{FM$^2$-Medium}  & \best $91.43_{(\pm0.41)}$ & \best $95.08_{(\pm0.22)}$ & \best $87.25_{(\pm0.48)}$ & \best $93.81$ \\
& \textbf{FM$^2$-Large}   & \best $92.58_{(\pm0.33)}$  & \best $95.92_{(\pm0.19)}$  & \best $88.73_{(\pm0.40)}$  & \best $95.04$ \\
\bottomrule
\end{tabular}}
\label{tab:cel}
\end{table}

\paragraph{\bf Results.}
Table~\ref{tab:cel} compares ten image-only FL baselines with seven caption-enhanced variants. Several observations stand out. First, appending captions yields consistent improvements for every FL method: even the weakest baseline FedAvg gains +4.47\% on the Hard setting and +7.35\% on MIMH-SA, confirming that language grounding provides complementary information regardless of the aggregation strategy. Second, the improvement is most pronounced on MIMH-SA and cross-modality transfer, where visual domains are completely disjoint and captions serve as a shared semantic bridge; FM$^2$-Large+CEL reaches 95.04\% cross-modality average, a +4.56\% margin over its image-only counterpart. Third, the Tiny$\to$Medium$\to$Large scaling pattern is preserved under CEL (85.20$\to$87.25$\to$88.73 on MIMH-SA), indicating that MoE capacity scaling and caption supervision contribute orthogonally to performance. Among all image-only methods, FM$^2$-Large already surpasses the best non-FM$^2$ baseline (FedProto) by a large margin, and the addition of CEL further widens this gap.

\subsection{Visual Question Answering}

\paragraph{\bf Datasets and Setup.}
Following the federated medical VQA setup of OmniFM~\cite{liu2026omnifm}, we construct three settings using SLAKE~\cite{liu2021slake}, VQA-RAD~\cite{lau2018dataset}, and VQA-Med 2019/2020/2021~\cite{ben2019vqa,ben2021overview}. \textbf{Task~1} uses SLAKE with modality-based IID/non-IID splits across three clients under varying LLaVA fine-tuning strategies. \textbf{Task~2} constructs eight modality-specific clients to test cross-domain DMoE specialization. \textbf{Task~3} treats each dataset as an independent mixed-modality client (five total) under different PEFT regimes. All tasks use LLaVA-1.5 with CLIP ViT-B/32 augmented by our dual MoE modules and LLaMA-3.2-3B fine-tuned via LoRA~\cite{hu2021lora} ($r{=}8$, $\alpha{=}32$). FedDAT, PromptFL, and F$^3$OCUS are re-implemented on the identical LLaVA-1.5 backbone with the same LoRA configuration, with only their respective federated adaptation strategies applied, ensuring backbone-level parity across all methods.

\begin{table}[tbh]
    \centering
    \caption{VQA accuracy on Task~1 (SLAKE) under IID/non-IID settings. F-C/F-L: connector/LLM-only tuning; F-CL/F-2stage: joint/sequential tuning. \colorbox{bestbg}{\textbf{Blue}}: best, \colorbox{secondbg}{\underline{Peach}}: second-best.}
    \vspace{-6pt}
    \setlength{\tabcolsep}{2pt}
    \setlength{\extrarowheight}{2pt}
    \resizebox{\columnwidth}{!}{
    \begin{tabular}{l*{4}{cc}}
        \toprule
        \multirow{2}{*}{\makecell[l]{\textbf{Method}}} &
        \multicolumn{2}{c}{\textbf{F-C}} &
        \multicolumn{2}{c}{\textbf{F-L}} &
        \multicolumn{2}{c}{\textbf{F-CL}} &
        \multicolumn{2}{c}{\textbf{F-2stage}} \\
        \cmidrule(lr){2-3} \cmidrule(lr){4-5} \cmidrule(lr){6-7} \cmidrule(lr){8-9}
        & IID & Non-IID & IID & Non-IID & IID & Non-IID & IID & Non-IID \\
        \midrule
        FedAvg~\cite{mcmahan2017communication} & 0.783 & 0.775 & 0.806 & 0.802 & 0.823 & 0.827 & 0.811 & 0.814 \\
        FedProx~\cite{li2020federated} & 0.734 & 0.750 & 0.800 & 0.780 & 0.816 & 0.796 & 0.773 & 0.785 \\
        FedAdam~\cite{reddi2020adaptive} & 0.741 & 0.735 & 0.783 & 0.771 & 0.777 & 0.774 & 0.782 & 0.777 \\
        FedAvgM~\cite{hsu2019measuring} & 0.754 & 0.747 & 0.789 & 0.786 & 0.784 & 0.768 & 0.793 & 0.794 \\
        FedYogi~\cite{reddi2020adaptive} & 0.745 & 0.736 & 0.782 & 0.769 & 0.783 & 0.774 & 0.785 & 0.782 \\
        FedDyn~\cite{acar2021federated} & 0.795 & 0.783 & \secondres{0.812} & 0.800 & 0.820 & 0.827 & 0.809 & 0.816 \\
        FedPer~\cite{arivazhagan2019federated} & \secondres{0.799} & \secondres{0.790} & 0.809 & \secondres{0.814} & \secondres{0.827} & \secondres{0.833} & \secondres{0.818} & \secondres{0.824} \\
        \midrule
        \textbf{FM$^2$ (Ours)} & \boldres{0.811} & \boldres{0.805} & \boldres{0.818} & \boldres{0.828} & \boldres{0.838} & \boldres{0.844} & \boldres{0.829} & \boldres{0.837} \\
        \bottomrule
    \end{tabular}}
    \label{tab:task1_vqa}
\end{table}

\begin{table}[tbh]
    \centering
    \caption{VQA accuracy on Task~2 (complete modality heterogeneity, 8 clients). C1: CT, C2: Ultrasound, C3: OCT, C4: Fundus, C5: Microscopy, C6: Histopath., C7: Dermatoscopy, C8: X-ray. \colorbox{bestbg}{\textbf{Blue}}: best, \colorbox{secondbg}{\underline{Peach}}: second-best.}
    \vspace{-6pt}
    \setlength{\tabcolsep}{2pt}
    \setlength{\extrarowheight}{3pt}
    \resizebox{\columnwidth}{!}{
    \begin{tabular}{lccccccccc}
        \toprule
        \textbf{Method} & \multicolumn{8}{c}{\textbf{Task-Specific Performance}} & \textbf{Avg.} \\
        \cmidrule(lr){2-9}
        & C1 & C2 & C3 & C4 & C5 & C6 & C7 & C8 & \\
        \midrule
        FedAvg~\cite{mcmahan2017communication} & 88.67 & 72.59 & 70.86 & 85.93 & 71.09 & 89.45 & 66.62 & 76.48 & 77.71 \\
        FedProx~\cite{li2020federated} & 88.92 & 73.05 & 71.03 & 86.10 & 71.40 & 89.20 & 66.81 & 76.32 & 77.85 \\
        FedAdam~\cite{reddi2020adaptive} & 89.10 & \secondres{74.22} & 71.58 & 85.70 & 71.85 & 89.80 & 67.20 & 76.90 & 78.17 \\
        FedDyn~\cite{acar2021federated} & 88.75 & 73.90 & 71.92 & \secondres{86.35} & 71.55 & \secondres{90.02} & 67.45 & 76.70 & 78.08 \\
        FedPer~\cite{arivazhagan2019federated} & \secondres{89.43} & 74.10 & \secondres{72.21} & 85.95 & \secondres{72.30} & 89.90 & \secondres{67.82} & \secondres{77.52} & \secondres{78.40} \\
        \midrule
        \textbf{FM$^2$ (Ours)} & \boldres{90.28} & \boldres{75.71} & \boldres{73.05} & \boldres{87.42} & \boldres{73.35} & \boldres{90.48} & \boldres{68.43} & \boldres{78.18} & \boldres{79.60} \\
        \bottomrule
    \end{tabular}}
    \label{tab:vqa_task2}
    \vspace{-6pt}
\end{table}

\begin{table}[tbh]
    \centering
    \caption{VQA accuracy on Task~3 (mixed-modality heterogeneity, 5 dataset-level clients). VM2019/2020/2021 is VQA-Med. VR: VQA-RAD. \colorbox{bestbg}{\textbf{Blue}}: best, \colorbox{secondbg}{\underline{Peach}}: second-best.}
    \vspace{-6pt}
    \resizebox{1\columnwidth}{!}{
    \begin{tabular}{lcccccc}
        \toprule
        \textbf{Method} & \textbf{SLAKE} & \textbf{VM2019} & \textbf{VM2020} & \textbf{VM2021} & \textbf{VR} & \textbf{Average} \\
        \midrule
        \textbf{Full Tuning} \\
        \qquad $\triangleright$ \color{midgray}{FedAvg} & 77.45 & 67.25 & 15.06 & 22.00 & 41.52 & 44.66 \\
        \qquad $\triangleright$ \color{midgray}{FedProx} & 77.47 & 67.00 & 14.87 & 22.12 & 41.00 & 44.33 \\
        \qquad $\triangleright$ \color{midgray}{FedAdam} & 77.00 & 67.89 & 15.12 & 22.44 & 42.42 & 44.77 \\
        \qquad $\triangleright$ \color{midgray}{FedDyn} & 78.24 & 69.00 & 16.21 & 21.65 & 42.02 & 45.02 \\
        \qquad $\triangleright$ \color{midgray}{FedPer} & \secondres{79.23} & \secondres{69.20} & \secondres{16.77} & \secondres{22.54} & \secondres{43.43} & \secondres{46.03} \\
        \qquad \textcolor{blue!70!black}{$\blacktriangleright$} \color{midgray}{\textbf{FM$^2$ (Ours)}} & \boldres{82.58} & \boldres{70.42} & \boldres{18.45} & \boldres{22.65} & \boldres{43.62} & \boldres{47.54} \\
        \textbf{Adapter}~\cite{houlsby2019parameter} \\
        \qquad $\triangleright$ \color{midgray}{FedAvg} & 72.82 & 64.45 & 11.56 & 21.00 & 38.35 & 41.64 \\
        \qquad $\triangleright$ \color{midgray}{FedProx} & 73.10 & 64.70 & 11.72 & 21.18 & 38.60 & 41.86 \\
        \qquad $\triangleright$ \color{midgray}{FedAdam} & 73.05 & 65.12 & 12.08 & 21.47 & 39.12 & 42.17 \\
        \qquad $\triangleright$ \color{midgray}{FedDyn} & 74.02 & 65.88 & 12.41 & 21.39 & 39.05 & 42.55 \\
        \qquad $\triangleright$ \color{midgray}{FedPer} & \secondres{75.10} & \secondres{66.30} & \secondres{12.95} & \secondres{21.76} & \secondres{39.70} & \secondres{43.16} \\
        \qquad \textcolor{blue!70!black}{$\blacktriangleright$} \color{midgray}{\textbf{FM$^2$ (Ours)}} & \boldres{78.15} & \boldres{67.72} & \boldres{14.08} & \boldres{22.12} & \boldres{40.42} & \boldres{44.50} \\
        \textbf{LayerNorm}~\cite{basu2024strong} \\
        \qquad $\triangleright$ \color{midgray}{FedAvg} & 69.53 & 62.22 & 10.59 & 22.00 & 36.89 & 40.24 \\
        \qquad $\triangleright$ \color{midgray}{FedProx} & 69.80 & 62.40 & 10.75 & 22.12 & 37.05 & 40.42 \\
        \qquad $\triangleright$ \color{midgray}{FedAdam} & 70.02 & 62.95 & 11.08 & 22.35 & 37.48 & 40.78 \\
        \qquad $\triangleright$ \color{midgray}{FedDyn} & 70.55 & 63.40 & 11.32 & 22.28 & 37.60 & 41.03 \\
        \qquad $\triangleright$ \color{midgray}{FedPer} & \secondres{71.22} & \secondres{63.88} & \secondres{11.70} & \secondres{22.55} & \secondres{38.10} & \secondres{41.49} \\
        \qquad \textcolor{blue!70!black}{$\blacktriangleright$} \color{midgray}{\textbf{FM$^2$ (Ours)}} & \boldres{73.42} & \boldres{64.88} & \boldres{12.55} & \boldres{22.85} & \boldres{38.82} & \boldres{42.50} \\
        \textbf{LoRA}~\cite{hu2021lora} \\
        \qquad $\triangleright$ \color{midgray}{FedAvg} & 57.73 & 60.18 & 4.59 & 15.00 & 31.16 & 33.73 \\
        \qquad $\triangleright$ \color{midgray}{FedProx} & 58.10 & 60.05 & 4.80 & 15.21 & 31.45 & 33.92 \\
        \qquad $\triangleright$ \color{midgray}{FedAdam} & 58.02 & 60.72 & 5.10 & 15.48 & 31.90 & 34.24 \\
        \qquad $\triangleright$ \color{midgray}{FedDyn} & 58.30 & 60.70 & 5.05 & 15.10 & 31.60 & 34.15 \\
        \qquad $\triangleright$ \color{midgray}{FedPer} & \secondres{60.20} & \secondres{61.55} & \secondres{5.80} & \secondres{15.70} & \secondres{32.40} & \secondres{35.13} \\
        \qquad \textcolor{blue!70!black}{$\blacktriangleright$} \color{midgray}{\textbf{FM$^2$ (Ours)}} & \boldres{62.72} & \boldres{62.35} & \boldres{6.65} & \boldres{15.88} & \boldres{32.32} & \boldres{35.98} \\
        \textbf{Bias}~\cite{cai2020tinytl} \\
        \qquad $\triangleright$ \color{midgray}{FedAvg} & 68.25 & 55.61 & 11.17 & 17.00 & 35.47 & 37.50 \\
        \qquad $\triangleright$ \color{midgray}{FedProx} & 68.60 & 55.40 & 11.35 & 17.12 & 35.60 & 37.61 \\
        \qquad $\triangleright$ \color{midgray}{FedAdam} & 68.10 & 56.05 & 11.70 & \secondres{17.35} & \secondres{36.10} & 37.86 \\
        \qquad $\triangleright$ \color{midgray}{FedDyn} & 68.70 & 56.00 & 11.45 & 17.05 & 35.70 & 37.78 \\
        \qquad $\triangleright$ \color{midgray}{FedPer} & \secondres{70.10} & \secondres{56.88} & \secondres{12.00} & 17.00 & 36.00 & \secondres{38.40} \\
        \qquad \textcolor{blue!70!black}{$\blacktriangleright$} \color{midgray}{\textbf{FM$^2$ (Ours)}} & \boldres{71.52} & \boldres{57.42} & \boldres{12.55} & \boldres{17.85} & \boldres{36.48} & \boldres{39.16} \\
        \midrule
        FedDAT~\cite{chen2024feddat} & 72.79 & 63.58 & 12.46 & 23.00 & 38.91 & 42.15 \\
        PromptFL~\cite{guo2023promptfl} & 65.63 & 57.56 & 4.78 & 15.00 & 40.26 & 36.65 \\
        F$^3$OCUS~\cite{saha2025f} & \secondres{74.69} & \secondres{63.82} & \secondres{12.84} & \secondres{24.00} & \secondres{42.30} & \secondres{43.53} \\
        \textbf{FM$^2$ (Ours)} & \boldres{79.45} & \boldres{68.52} & \boldres{15.33} & \boldres{23.75} & \boldres{42.65} & \boldres{45.94} \\
        \bottomrule
    \end{tabular}}
    \label{tab:vqa_task3}
    \vspace{-6pt}
\end{table}

\begin{figure*}[tbh]
  \centering
  \includegraphics[width=.925\textwidth]{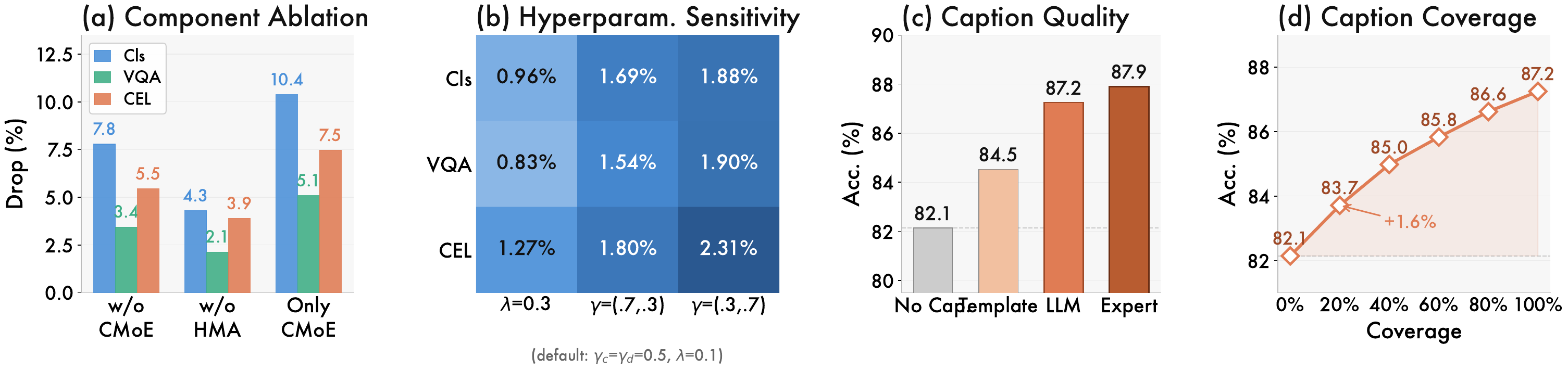}
  \caption{Ablation and analysis across all three tasks. \textbf{(a)}~Component ablation. \textbf{(b)}~Hyperparameter sensitivity across different $(\gamma_c, \gamma_d, \lambda)$ configurations. \textbf{(c)}~Impact of caption quality on CEL. \textbf{(d)}~Effect of partial caption availability on CEL.}
  \label{fig:ablation}
\end{figure*}
\paragraph{\bf Results.}
FM$^2$ consistently outperforms all baselines across all three tasks and tuning strategies (Tables~\ref{tab:task1_vqa}--\ref{tab:vqa_task3}). On Task~1, gains are largest under F-CL (0.844 vs.\ 0.833 for the best baseline in Non-IID), validating that HMA effectively mitigates distribution discrepancies during VQA training. On Task~2, FM$^2$ achieves the highest average accuracy of 79.60, confirming DMoE's cross-domain specialization. On Task~3, FM$^2$ maintains consistent advantages across all PEFT regimes and outperforms specialized federated VLM methods FedDAT~\cite{chen2024feddat}, PromptFL~\cite{guo2023promptfl}, and F$^3$OCUS~\cite{saha2025f} by clear margins.

\subsection{Framework Analysis}

We analyze FM$^2$ from six complementary angles: \textbf{(i)}~component ablation (Fig.~\ref{fig:ablation}a), \textbf{(ii)}~hyperparameter sensitivity (Fig.~\ref{fig:ablation}b), \textbf{(iii)}~caption quality and coverage (Fig.~\ref{fig:ablation}c--d), \textbf{(iv)}~communication cost (Table~\ref{tab:comm}), \textbf{(v)}~convergence behavior, and \textbf{(vi)}~learned feature structure (Fig.~\ref{fig:analysis}). Unless noted, ablation and sensitivity results span three tasks: classification (MIMH-5-Domain, $\varphi{=}0.1$, FM$^2$-Tiny), VQA (Task~1, F-CL, Non-IID), and CEL (MIMH-SA, FM$^2$+CEL), to verify that findings generalize across objectives.

\paragraph{\bf Component Ablation.}
We progressively disable FM$^2$'s core components (Fig.~\ref{fig:ablation}a). Among single-component removals, disabling CMoE costs 3.4--7.8\% and disabling HMA costs 2.1--4.3\%. The most harmful variant, \emph{Only CMoE}, removes \emph{both} DMoE and HMA and drops 5.1--10.4\%, showing that personalization alone is insufficient without global domain consensus. Since w/o-HMA and Only-CMoE differ only in DMoE, removing DMoE alone still costs 5.4 and 1.8 points on MIMH-5-Domain and MIMH-SA, so DMoE contributes beyond a personalized CMoE plus the federated backbone. The importance ordering (Only CMoE $>$ w/o CMoE $>$ w/o HMA) is consistent across all three tasks, confirming architectural robustness independent of downstream objective.

\paragraph{\bf Hyperparameter Sensitivity.}
Fig.~\ref{fig:ablation}(b) examines the effect of varying $\gamma_c$, $\gamma_d$, and $\lambda$ away from the default setting ($\gamma_c{=}0.5$, $\gamma_d{=}0.5$, $\lambda{=}0.1$). Across all three tasks, no non-default configuration degrades performance by more than 2.31\%, indicating that FM$^2$ is not sensitive to precise hyperparameter tuning and that the default configuration generalizes reliably across diverse settings and tasks.

\begin{table}[tbh]
\centering
\caption{Communication and computation cost per round (MIMH-5-Domain, $\varphi{=}0.1$, $r{=}100\%$, 10 clients).}
\vspace{-8pt}
\resizebox{\columnwidth}{!}{
\begin{tabular}{lcccc}
\toprule
\textbf{Method} & \textbf{Comm. Params (K)} & \textbf{Comm./Rd (MB)} & \textbf{Time/Rd (s)} & \textbf{Acc.} \\
\midrule
FedAvg~\cite{mcmahan2017communication}   & 520  & 41.6 & 8.2  & 23.05 \\
FedProx~\cite{li2020federated}           & 520  & 41.6 & 9.5  & 38.70 \\
FedProto~\cite{tan2022fedproto}          & 1.3  & 0.1  & 7.8  & 70.84 \\
FedRep~\cite{collins2021exploiting}      & 450  & 36.0 & 9.1  & 74.72 \\
\midrule
FM$^2$-Tiny    & 718  & 57.4  & 12.5 & \best 83.79 \\
FM$^2$-Medium  & 824  & 65.9  & 14.8 & \best 85.66 \\
FM$^2$-Large   & 1035 & 82.8  & 18.3 & \best 86.68 \\
\bottomrule
\end{tabular}}
\label{tab:comm}
\vspace{-8pt}
\end{table}

\paragraph{\bf Caption Quality and Coverage.}
Fig.~\ref{fig:ablation}(c--d) examines CEL robustness on MIMH-SA. LLM-generated captions (87.25\%) nearly match expert annotations (87.89\%) while far exceeding templates (84.53\%), confirming API generation as a cost-effective substitute for manual labeling. Even at 20\% coverage, CEL yields +1.57\% over the image-only baseline, with monotonically increasing but diminishing returns, showing that CEL is beneficial even under the uneven caption availability typical of real federated deployments.
\begin{figure}[tbh]
  \centering
  \includegraphics[width=.95\columnwidth]{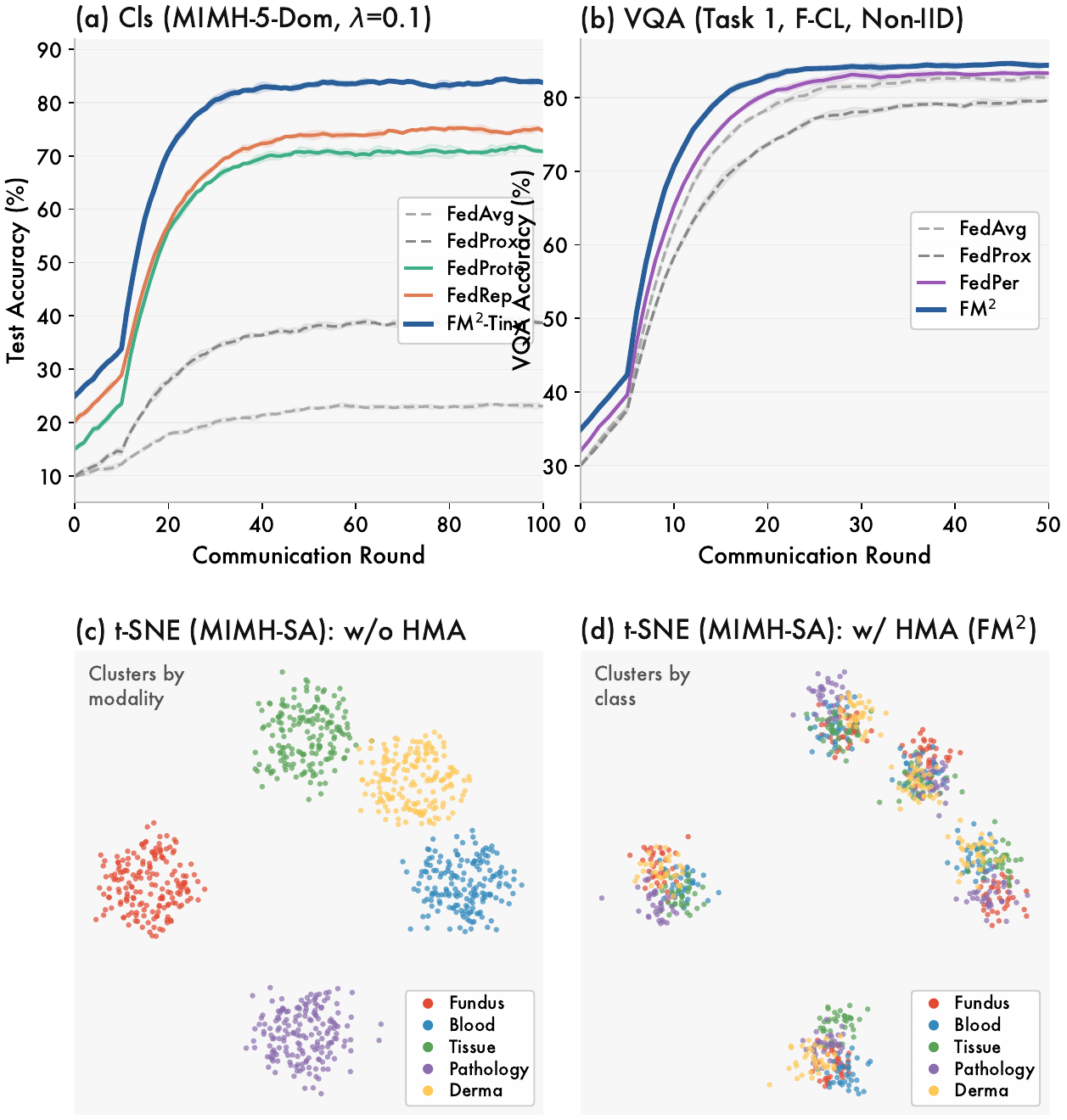}
  \caption{Convergence and feature analysis. \textbf{(a)}~Classification convergence on MIMH-5-Domain ($\varphi{=}0.1$, $r{=}100\%$); shaded bands show $\pm$1 std over 5 runs. \textbf{(b)}~VQA convergence on Task~1 (F-CL, Non-IID, 50 rounds). \textbf{(c)}~t-SNE on MIMH-SA without HMA: features cluster by modality. \textbf{(d)}~With HMA (FM$^2$): features cluster by semantic class across modalities.}
  \label{fig:analysis}
  \vspace{-8pt}
\end{figure}
\paragraph{\bf Communication Cost.}
Table~\ref{tab:comm} reports per-round communication and wall-clock overhead. Because the CMoE module is entirely local (never communicated), the additional communication of FM$^2$-Tiny over FedAvg is 38\% in parameter count (718K vs.\ 520K), a modest price for a 60.75 percentage-point accuracy gain. FedProto achieves the lowest communication (prototype-only exchange) but plateaus at 70.84\%. FM$^2$-Large uses $2\times$ FedAvg's bandwidth yet reaches 86.68\%, representing a fundamentally different accuracy-communication trade-off.

\paragraph{\bf Convergence and Feature Analysis.}
Figure~\ref{fig:analysis} reports convergence and feature structure. On classification (Fig.~\ref{fig:analysis}a), FM$^2$-Tiny reaches 80\% by round~40, a level no baseline achieves at round~100; FedAvg stagnates below 25\% while FedProto and FedRep plateau near 70--75\%. On VQA (Fig.~\ref{fig:analysis}b), FM$^2$ converges fastest to 84.4\%, above FedPer (83.3\%) with tighter variance. t-SNE on MIMH-SA (Fig.~\ref{fig:analysis}c--d) shows that without HMA features cluster by modality; with HMA they reorganize by semantic class, with all five modalities interleaving within each cluster, validating HMA's design objective and the out-of-modality generalization in Tables~\ref{tab:main_results} and~\ref{tab:cel}.

\section{Conclusion}

In this paper, we proposed FM$^2$, a federated multimodal foundation model framework that addresses the under-explored challenge of Imaging Modality Heterogeneity in medical imaging. Through dual Mixture-of-Experts modules and Heterogeneous Modality Alignment with provable convergence guarantees, FM$^2$ disentangles personalized class knowledge from shared domain representations. Extensions to Caption-Enhanced Learning and Federated Medical VQA demonstrate FM$^2$'s versatility as a multimodal foundation model. Extensive experiments on the MIMH benchmark (for classification and CEL) and real-world medical VQA datasets confirm consistent superiority over state-of-the-art baselines, fast convergence, strong out-of-modality generalization, and practical robustness to caption availability and hyperparameter choices.

\begin{acks}
This work was supported by the Guangdong Basic and Applied Basic Research Foundation (No.~2024A1515510031). This work is also supported by the Intelligent Computing Center of Shenzhen University.
\end{acks}

\bibliographystyle{ACM-Reference-Format}
\bibliography{mybib}

\end{document}